\documentclass{article}

\usepackage[final]{style/neurips_2023}

\PassOptionsToPackage{numbers, compress}{natbib}
\usepackage[utf8]{inputenc} %
\usepackage[T1]{fontenc}    %
\usepackage[hidelinks,colorlinks=true,citecolor=blue]{hyperref}       %
\usepackage{url}            %
\usepackage{booktabs}       %
\usepackage{amsfonts}       %
\usepackage{nicefrac}       %
\usepackage{microtype}      %
\usepackage{dsfont}
\usepackage{etoc}
\usepackage{minitoc}

\usepackage{pbox}
\usepackage{bigints}
\usepackage{url}
\usepackage{comment} 
\usepackage{booktabs}  

\usepackage{enumitem}

\usepackage{wrapfig}
\usepackage{tikz}
\usetikzlibrary{bayesnet}

\usepackage{algorithmic}
\usepackage[linesnumbered, ruled]{algorithm2e}
\usepackage{color}
\usepackage[tableposition=top]{caption}
\usepackage[titletoc,toc,title]{appendix}
\usepackage{subcaption}
\usepackage[utf8]{inputenc} %
\usepackage[T1]{fontenc}    %
\usepackage{amsfonts}       
\usepackage{amsthm}       
\usepackage{nicefrac}       
\usepackage{microtype}      
\usepackage{graphicx}
\usepackage[raggedright]{sidecap}
\usepackage{natbib}
\usepackage{xspace}
\usepackage{todonotes}
\usepackage{amsmath}
\usepackage{amsopn}
\usepackage{multirow}
\newcommand{\ci}{\perp\!\!\!\perp}

\usepackage{tikz}
\usetikzlibrary{bayesnet}

\newcommand{\R}[1]{\mathbb{R}^{#1}}

\newtheorem{theorem}{Theorem}

\newtheorem{proposition}[theorem]{Proposition}
\theoremstyle{definition}

\theoremstyle{remark}

\title{Structured Neural Networks for Density Estimation and Causal Inference}

\author{%
  Asic Q. Chen$^{1}\thanks{Equal Contribution  $^\dag$Equal Senior Authorship}$ \quad Ruian Shi$^{1*}$\quad Xiang Gao$^{1}$ \quad Ricardo Baptista$^{2\dag}$\quad Rahul G. Krishnan$^{1\dag}$\\
  $^1$University of Toronto, Vector Institute \quad $^2$California Institute of Technology\\
  \texttt{\{asicchen, ruiashi, xgao, rahulgk\}@cs.toronto.edu} \\
   \texttt{rsb@caltech.edu} \\
}

\begin{document}

\maketitle

\begin{abstract}
Injecting structure into neural networks enables learning functions that satisfy invariances with respect to subsets of inputs. For instance, when learning generative models using neural networks, it is advantageous to encode the conditional independence structure of observed variables, often in the form of Bayesian networks. We propose the Structured Neural Network (StrNN), which injects structure through masking pathways in a neural network. The masks are designed via a novel relationship we explore between neural network architectures and binary matrix factorization, to ensure that the desired independencies are respected. We devise and study practical algorithms for this otherwise NP-hard design problem based on novel objectives that control the model architecture. We demonstrate the utility of StrNN in three applications: (1) binary and Gaussian density estimation with StrNN, (2) real-valued density estimation with Structured Autoregressive Flows (StrAFs) and Structured Continuous Normalizing Flows (StrCNF), and (3) interventional and counterfactual analysis with StrAFs for causal inference. Our work opens up new avenues for learning neural networks that enable data-efficient generative modeling and the use of normalizing flows for causal effect estimation. 

\end{abstract}

\doparttoc %
\faketableofcontents %

\newcommand{\ian}[1]{{\textcolor{orange}{#1}}}
\newcommand{\xgao}[1]{{\textcolor{purple}{#1}}}
\newcommand{\asic}[1]{{\textcolor{cyan}{#1}}}

\section{Introduction}
\label{sec:introduction}

\begin{wrapfigure}{r}{0.42\textwidth}
\vspace{-2.5em}
\centering
\includegraphics[width=\linewidth,trim={0 0.2cm 1cm 0.55cm}, clip]{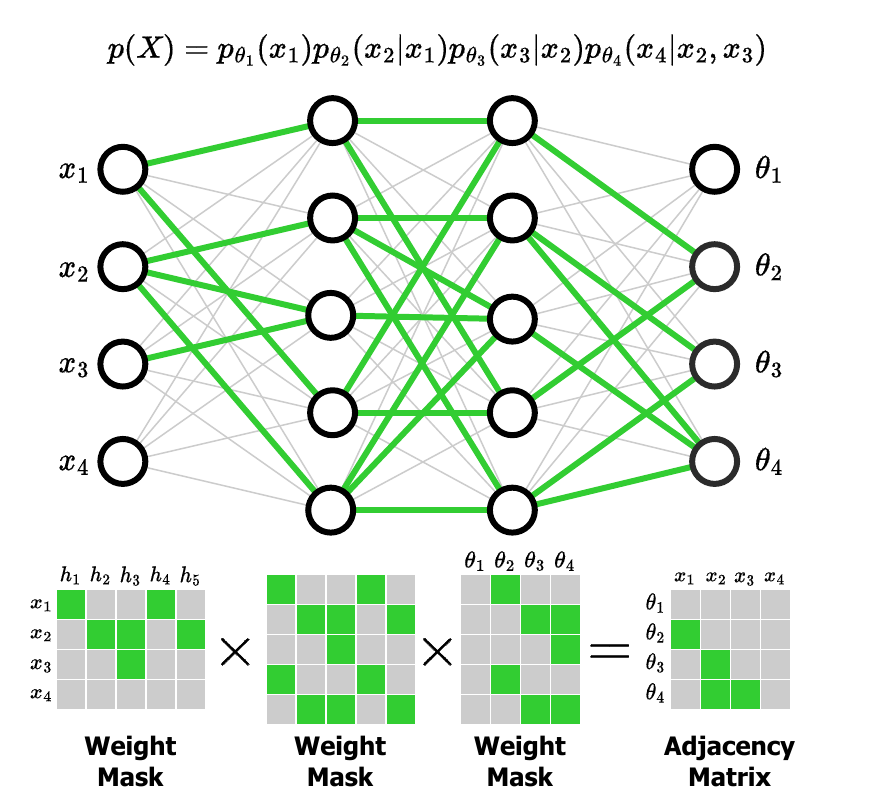} 
\caption{\small StrNN injects structure by masking the weights of a neural network. \textit{Top}: StrNN connections (green) compared to a fully connected network (gray). \textit{Bottom}: Binary factorization of an adjacency matrix yields weight masks. Masked weights shown in gray.\\}
\label{fig:method}
\vspace{-4.25em}
\end{wrapfigure}

The incorporation of structure into machine learning models has been shown to provide benefits for model generalization, learning efficiency, and interpretability. The improvements are particularly salient when learning from small amounts of data. This idea has found use in reinforcement learning \citep{ok2018exploration}, computational healthcare \citep{hussain2021neural, cui2020deterrent}, survival analysis \citep{gharari2023copula}, time series analysis \citep{curi2020structured}, and causal inference \citep{balazadeh2022partial}. 

This work focuses on the problem of density estimation from high-dimensional data which has been approached through a variety of lenses. For example, normalizing flows \citep{tabak2013family, rezende2015variational} model data by transforming a base distribution through a series of invertible transformations. Masked autoencoders (MADE) \citep{germain2015made} model the joint distribution via an autoregressive factorization of the random variables. The factorization is enforced by integrating structure in the neural network of the autoencoder. The MADE architecture zeros out weights in a neural network to ensure each output dimension has an autoregressive dependence on the input dimensions. 
When data is scarce, this may lead to over-fitting and harm generalization. When knowledge of a Bayesian network~\citep{pearl2011bayesian} and the associated conditional independencies exist, it is desirable to inject this knowledge directly into the network to improve density estimation. In this work, we extend the concept of weight masking, as in MADEs, to go beyond autoregressive dependencies of the output on the input. 

This work proposes the \textbf{Structured Neural Network (StrNN)}, a network architecture that enforces functional independence relationships between inputs and outputs via weight masking. In other words, the output can remain unaffected by changes to (subsets of) the input. We focus on instantiating this idea to model conditional independence between inputs when neural networks are deployed to model the density of random variables, as illustrated in Figure~\ref{fig:method}. Any set of conditional independence statements (e.g. in a Bayesian network) may be represented via a binary adjacency matrix. StrNN performs binary matrix factorization to generate a set of weight masks that follow the adjacency matrix. There are two key challenges we overcome. First, the general problem of binary matrix factorization in this context is under-specified, as there exist many valid masks whose matrix product realizes a given adjacency matrix. To this end, we propose the idea of neural network path maximization as a strategy to guide the generation of optimal masks. Secondly, binary matrix factorization is NP-hard in general. We study practical solutions that generate valid mask matrices efficiently. 

StrNN can then be applied to NN-based density estimation in various contexts. Where conditional independence properties are known a priori, we show that StrNN can be used to estimate parameters of data distributions while keeping specified variables conditionally independent. We further integrate StrNNs into various discrete and continuous flow architectures, including the autoregressive normalizing flow \citep{papamakarios2017masked, huang2018neural} to form the \textbf{Structured Autoregressive Flow (StrAF)}. The StrAF model uses the StrNN as a normalizing flow conditioner network, thus enforcing a given adjacency structure within each flow layer. The StrAF preserves variable orderings between chained layers, allowing the adjacency structure to be respected throughout the entire flow.

Finally, we study a natural application of StrAF in causal effect estimation, a domain that often requires flexible methods for density estimation that enforce conditional independence relationships within causal graphs~\citep{pearl2009causality}. We show how StrAFs can be used to perform interventional and counterfactual queries better than existing flow-based causal models that do not incorporate graphical structures. 
Across the board, we highlight how incorporating conditional independence structure improves generalization error when learning from a small number of samples.

In summary, the main contributions of this work are as follows:
\begin{enumerate}[leftmargin=*]

    \item We introduce StrNN, a weight-masked neural network that can efficiently learn functions with specific variable dependence structures. In particular, it can inject prior domain knowledge in the form of Bayesian networks to probability distributions. We formalize the weight masking as an optimization problem, where we can pick the objective based on desired neural architectures. We propose an efficient binary matrix factorization algorithm to mask arbitrary neural networks.
    \item We integrate StrNN into autoregressive and continuous normalizing flows for best-in-class performance in density estimation and sample generation tasks.
    \item We apply StrAF for causal effect estimation and showcase its ability to outperform existing causal flow models in accurately addressing interventional and counterfactual queries.
\end{enumerate}

\section{Background}
\label{sec:background}
\textbf{Masked Autoencoders for Density Estimation (MADE):} Masked neural networks were introduced for density estimation on binary-valued data \citep{germain2015made}. Given $\mathbf{x}=(x_1, ..., x_d)$, MADE factorizes $p(\mathbf{x})$ as the product of the outputs of a neural network. Writing the $j$-th output as the conditional probability $\hat{x}_j := p(x_j=1|\mathbf{x}_{<j})$, the joint distribution can be rewritten exactly as the binary cross-entropy loss.
As long as the neural network outputs are autoregressive in relation to its inputs, we can minimize the cross-entropy loss for density estimation. To enforce the autoregressive property for a neural network $y = f(x)$ with a single hidden layer and \textit{d} inputs and outputs, MADE element-wise multiplies the weight matrices W and V with binary masks $M^W$ and $M^V$:
\begin{equation}
\label{eq:NN_masked}
h(x) = g((W \odot M^W)x + b), \quad y = f((V \odot M^V)h(x) + c).
\end{equation}
The  autoregressive property is satisfied as long as the product of the masks, $M^VM^W \in \mathbb{R}^{d \times d}$, is lower triangular. The MADE masking algorithm~\eqref{alg:MADE} can be extended to neural networks with an arbitrary number of hidden layers and hidden sizes. For Gaussian data, the MADE model can be extended as $\mathbb{R}^d \rightarrow \mathbb{R}^{2d}$, $(x_1, ...x_d) \rightarrow (\hat{\mu}_1, ..., \hat{\mu}_d, \log(\hat{\sigma}_1), ..., \log(\hat{\sigma}_d))$. The last mask must be duplicated to ensure $\mu_j$ and $\sigma_j$ only depend on $\mathbf{x}_{<j}$. MADE can also be used as the conditioner in an autoregressive flow to model general data, as seen in~\citet{papamakarios2017masked}.

\textbf{Normalizing Flows:}
Normalizing flows~\citep{rezende2015variational} model complex data distributions and have been applied in many scenarios \citep{papamakarios2021normalizing}. Given 
data $\mathbf{x}\in \mathbb{R}^d$, a normalizing flow $\mathbf{T}\colon \mathbb{R}^d \rightarrow \mathbb{R}^d$ takes $\mathbf{x}$ to latent variables $\mathbf{z}\in \mathbb{R}^d$ that are distributed according to a simple base distribution $p_{\mathbf{z}}$, such as the standard normal. The transformation $\mathbf{T}$ must be a diffeomorphism (i.e., differentiable and invertible) so that we can compute the density of $\mathbf{x}$ via the change-of-variables formula:
    $p_{\mathbf{x}}(\mathbf{x}) = p_\mathbf{z}(\mathbf{T}(\mathbf{x}))|\det J_{\mathbf{T}}(\mathbf{x})|.$
We can compose multiple diffeomorphic transformations $\mathbf{T}_k$ to form the flow $\mathbf{T} = \mathbf{T}_1 \circ \dots \circ \mathbf{T}_K$ since diffeomorphisms are closed under composition. %
The flows are trained by maximizing the log-likelihood of the observed data under the density $p_{\mathbf{x}}(\mathbf{x})$. The log-likelihood can be evaluated efficiently when it is tractable to compute the Jacobian determinant of $\mathbf{T}$; for example when $\mathbf{T}_k$ is a lower triangular function~\citep{marzouk2016introduction}. Given the map, we can easily generate i.i.d.\thinspace samples from the learned distribution by sampling from the base distribution $\mathbf{z}^i \sim p_{\mathbf{z}}$ and evaluating the flow $\mathbf{T}^{-1}(\mathbf{z}^i)$. 

\textbf{Density Estimation with Autoregressive Flows:} 
When the Jacobian matrix of each flow layer is lower triangular, its determinant is simply the product of its diagonal entries \citep{huang2018neural}. 
This gives rise to the \textbf{autoregressive flow} formulation: given an ordering $\pi$ of the $d$ variables in the data vector $\mathbf{x}$, the $j$th component of the flow $\mathbf{T}$ has the form:
$x_j = \tau_j(z_j; c_j(\mathbf{x}_{<\pi(j)}))$ %
where each $\tau_j$ is an invertible \textit{transformer} and each $c_j$ is a \textit{conditioner} that only depends on the variables that come before $x_j$ in the ordering $\pi$. As a result, the map components define an autoregressive model that factors the density over a random variable $x$ as:
$p(\mathbf{x}) = \prod ^d _{j=1}p(x_j|\mathbf{x}_{<j})$ where 
$\mathbf{x}_{<j} = (x_1,\dots,x_{j-1}).$
\textit{autoregressive}. 
Under mild conditions, any arbitrary distribution $p_{\mathbf{x}}$ can be transformed into a base distribution with a lower triangular Jacobian~\citep{rezende2015variational}. That is, autoregressive flows are arbitrarily expressive given the target distribution.
One common choice of invertible functions for the transformer are %
monotonic neural networks~\citep{NEURIPS2019_2a084e55}.

\textbf{Density Estimation with Continuous Normalizing Flows:}
Continuous normalizing flows (CNFs) \citep{chen2018neural, grathwohl2018ffjord} represent the transformation $\mathbf{T}$ as the flow map solving the differential equation $\frac{\partial\mathbf{z}(t)}{\partial t} = f(\mathbf{z}(t), t; \theta)$. %
Given the initial condition $\mathbf{x} = \mathbf{z}(t_1) \sim p_{\mathbf{x}}$, we can integrate $f$ backwards in time from $t_1$ to $t_0$ to obtain $\mathbf{z} = \mathbf{z}(t_0) \sim p_{\mathbf{z}}$, or vice versa. In order to learn the CNF, \cite{chen2018neural} computes the change in log-density under the transformation using the \textit{instantaneous change of variables} formula, which is defined by the differential equation
$\frac{\partial \log p(\mathbf{z}(t))}{\partial t} = -\text{Tr}(\frac{\partial f}{\partial \mathbf{z}(t)})$. %
This expression is used to compute the log-likelihood of a target sample as $\log p(\mathbf{z}(t_1)) = \log p (\mathbf{z}(t_0)) - \int_{t_0}^{t_1} \text{Tr}(\frac{\partial f}{\partial \mathbf{z}(t)}) dt$. We refer the reader to~\cite{chen2018neural} for
the process of back-propagating through the objective using the adjoint sensitivity method as well as a discussion on the existence and uniqueness of a solution for the ODE flow map.

\textbf{Causal Inference with Autoregressive Flows:}
Modelling causal relationships is crucial for enabling effective decision-making in various fields \citep{pearl2009causality}. A structural equation model (SEM) parameterizes the process that generates observed data, allowing us to reason about interventions and counterfactuals. Given random variables $\mathbf{x}=(x_1, ..., x_d) \in \mathbb{R}^d$ with joint distribution $\mathbb{P}_\mathbf{x}$, the associated SEM consists of $d$ structural equations of the form $x_j = f_j(\mathbf{pa}_j, u_j)$, where $u$ represents mutually independent latent variables and $\mathbf{pa}_j$ denotes the direct causal parents of variable $x_j$. Each SEM also corresponds to a directed acyclic graph (DAG), with a causal ordering $\pi$ defined by the variables' dependencies.

\citet{khemakhem2021causal} showcased the intrinsic connection between SEMs and autoregressive flows. The authors demonstrated that affine autoregressive flows with a fixed ordering of variables can be used to parameterize SEMs under the framework of \textit{causal autoregressive flows} (CAREFL). When the causal ordering of variables is given, CAREFL outperforms other baselines in interventional tasks and generates accurate counterfactual samples. However, one significant limitation of CAREFL is that there is no guarantee that the autoregressive structure corresponds to the true dependencies in the causal graph beyond pairwise examples. In Section \ref{sec:structured_CAREFL}, we leverage StrAF to incorporate additional independence structure, enhancing its faithfulness to the causal DAG.

\section{Methodology}

\subsection{Structured Neural Networks}\label{sec:StrNN}
As neural networks are universal function approximators~\citep{hornik1989multilayer}, injecting structure into multilayer perceptrons (MLPs) naturally allows us to approximate complex functions with certain invariances. For a neural network that approximates an arbitrary function $f: \mathbb{R}^m \rightarrow \mathbb{R}^n$, one type of invariance we could consider is when a specific output $\hat{x}_j$ is independent from a given input $x_i$, i.e.: $\hat{x}_j \perp x_i$ means $\frac{\partial \hat{x}_j}{\partial x_i} = 0$. As a motivating example, we focus on the probabilistic density estimation problem framed as learning maps from one probability distribution to another. In this setting, it is imperative that we are able to learn structured functions between two high-dimensional spaces. Therefore in this paper, we seek to efficiently encode such invariances via weight masking.

For data $\mathbf{x}=(x_1, ..., x_d)$, we use the lower-triangular adjacency matrix $A \in \{0, 1\}^{d \times d}$ to represent the underlying variable dependence structure. In other words, $A_{ij} = 0$ for $j < i$ if and only if $x_i \perp x_j | x_{\{1,\dots,i\} \setminus j}$ and $A_{ij} = 1$ otherwise. This matrix encodes the same information as a Bayesian network DAG of the variables. In the fully autoregressive case, matrix A is a dense lower triangular matrix with all ones under the diagonal, which is the only case addressed in~\cite{germain2015made}. Their proposed MADE algorithm (Appendix \ref{appendix:MADE}) only encodes the structure of dense adjacency matrices, and cannot incorporate additional conditional independencies. Further, the non-deterministic version of the algorithm can introduce unwanted independence statements by chance, as discussed in Proposition~\ref{prop:randomMADE}.

We improve upon the idea of masked autoregressive neural networks to directly encode the independence structure represented by an adjacency matrix $A$ that is lower triangular but also has added sparsity. We observe that a masked neural network satisfies the structural constraints prescribed in $A$ if the product of the masks for each hidden layer has the same locations of zero and non-zero entries as $A$. Therefore, given the conditional independence structure of the underlying data generating process, we can encode structure into an autoregressive neural network by \textit{factoring the adjacency matrix into binary mask matrices for each hidden layer}.

More specifically, given an adjacency matrix $A \in \{0, 1\}^{d \times d}$ and a neural network with $L$ hidden layers, each with $h_1, h_2, ..., h_L$ hidden units ($\ge d$), we seek mask matrices $M^1 \in \{0, 1\}^{h_1 \times d}, M^2 \in \{0, 1\}^{h_2 \times h_1}, \dots, M^L \in \{0, 1\}^{d \times h_L}$ such that
$\label{eq:strNN_problem}
    A' \sim A, \text{where } A' := M^{L} \cdot \cdot \cdot M^2 \cdot M^1$.
We use $A'\sim A$ to denote that matrices $A'$ and $A$ share the same sparsity pattern, i.e.: exact same locations of zeros and non-zeros. Note that here $A$ is a binary matrix and $A'$ is an integer matrix. We then mask the neural network's hidden layers using $M^1, M^2, ..., M^L$ as per~\eqref{eq:NN_masked} to obtain a \textbf{Structured Neural Network (StrNN)}, which respects the prescribed independence constraints. The value of each entry $A'_{ij}$ thus corresponds to the number connections flowing from input $x_j$ to output $\hat{x}_i$ in the StrNN.

Finding the optimal solution to this problem is NP-hard since binary matrix factorization can be reduced to the biclique covering problem~\citep{miettinen2020recent, ravanbakhsh2016boolean, ORLIN1977406}. Furthermore, most existing works focus on deconstructing a given matrix $A$ into low-rank factors while minimizing (but not eliminating) reconstruction error~\citep{dan2015low, fomin2020parameterized}. In our application, any non-zero reconstruction error breaks the independence structure we want to enforce in our masked neural network. This puts existing algorithms for low-rank binary matrix factorization outside the scope of our paper. We instead consider the problem of finding factors that reproduce the adjacency matrix exactly, which is always possible when hidden layer dimensions are greater than the input and output dimension.

\textbf{Optimization Objectives.} Identifiability remains an issue even when we eliminate reconstruction error. Given an adjacency matrix $A$, there can be multiple solutions for factoring $A$ into per-layer masks that satisfy the constraints, especially if the dimensions of the hidden layers are much larger than $d$. Since the masks dictate which connections remain in the neural network, the chosen mask factorization algorithm directly impacts the neural network architecture. Hence, it is natural to explicitly specify a relevant objective to the neural network's approximation error during the matrix factorization step, such as the test log-likelihood.

Given that the approximation error is inaccessible when selecting the architecture, we were inspired by the Lottery Ticket Hypothesis~\citep{frankle2018lottery} and other pruning strategies~\citep{srivastava2014dropout, zoubin2016dropout} that identify a subset of valuable model connections. Our hypothesis is that given the same data and prior knowledge on independence structure, the masked neural network with \emph{more connections} is more expressive, and will thus be able to learn the data better and/or more quickly. 
To find such models, we consider two objectives: Equation~\eqref{eq: objective} that maximizes the number of connections in the neural network while respecting the conditional independence statements dictated by the adjacency matrix, and Equation~\eqref{eq: objective_variance} that maximizes connections while penalizing any pair of variables from having too many connections at the cost of the others. That is,

\vspace{-0.5em}
\hspace{0.075\textwidth}
\begin{minipage}[b]{.35\textwidth}
\begin{equation}
\label{eq: objective}
\max_{A' \sim A} \sum_{i=1}^d \sum_{j=1}^{d} A'_{ij},
\end{equation}
\end{minipage}
\hspace{.1\textwidth}
\begin{minipage}[b]{.35\textwidth}
\begin{equation}
\label{eq: objective_variance}
\max_{A' \sim A} \sum_{i=1}^d \sum_{j=1}^{i} A'_{ij} - \text{var}(A'),
\end{equation}
\end{minipage}
\hspace{0.1\textwidth}

where $\text{var}(A')$ is the variance across all entries in $A'$. While we focus on these two objectives, future work will find optimal architectures by identifying other objectives to improve approximation error.

\textbf{Factorization Algorithms.} The maximization of the aforementioned mask factorization is an intractable optimization problem. We therefore develop approximate algorithms to solve them. Our strategy for optimization involves recursively factorizing the mask matrix layer by layer. Given $A\in \{0, 1\}^{d \times d}$, we run Algorithm 1 once to find $ A_1 \in \{0, 1\}^{d \times h_1}$ and $ M^1 \in \{0, 1\}^{h_1 \times d}$ for a layer of width $h_1$ such that $A_1\cdot M^1 \sim A$, where $\sim$ denotes that the matrices share the same sparsity. For the next layer with $h_2$ hidden units, we use $A_1$ in place of $A$ to find $A_2 \in \{0, 1\}^{d \times h_2}$ and $M^2 \in \{0, 1\}^{h_2 \times h_1}$ such that $A_2 \cdot M^2 \sim A_1$. We repeat until we have found all the masks.

For each objective, we can obtain per-layer exact solutions using integer programming. While the Gurobi optimizer~\citep{gurobi} can be used for small $d$, this approach was found to be too computationally expensive for $d$ greater than 20, which is a severe limitation for real-world datasets and models. We hereby propose a greedy algorithm (shown in \textbf{Algorithm \ref{alg:mf}}) that \textit{approximates} the solution to the maximum connections objective in Equation~\eqref{eq: objective}. For each layer, the algorithm first replicates the structure given in the adjacency matrix $A$ by copying its rows into the first mask. It then maximizes the number of neural network connections by filling in the second mask with as many ones as possible while respecting the sparsity in $A$. See Appendix~\ref{appendix:greedy_algo} for a visual explanation of the algorithm. For a network with $d$-dimensional inputs and outputs and one hidden layer with $h$ units, this algorithm runs in $\mathcal{O}(dh)$ time. Scaling up to $L$ layers, where each hidden layer commonly contains $\mathcal O(d)$ units, the overall runtime is $\mathcal{O}(d^2L)$, which is much more efficient than the integer programming solutions. From our experiments, the greedy algorithm executes nearly instantaneously for dimensions in the thousands.

\begin{wrapfigure}{t}{0.44\textwidth}
    \small
    \centering
    \begin{minipage}{0.39\textwidth}
        \vspace{-1.5em}
        \centering
        \begin{algorithm}[H]
        \caption{Greedy factorization}\label{alg:mf}
        \KwData{$A\in \{0, 1\}^{d_1 \times d_2}$, hidden size h} 
        \KwResult{$M^V\in\{0, 1\}^{d_1 \times h}$, $M^W\in\{0, 1\}^{h \times d_2}$, satisfying $M^VM^W \sim A$}

        $nz \leftarrow$ non-zero rows in $A$\\
        Fill $M^W$ with $nz$; repeat until full\\
        Fill $M^V$ with ones\\
        for $i$-th row in $M^V$ do:\\
        \quad $C \leftarrow$ indices of 0's in $i$-th row of$A$\\
        \quad $T \leftarrow$ cols. of $M^W$ whose index $\in C$\\
        \quad $R \leftarrow$ indices of non-zero rows of $T$\\
        \quad for $r$ in $R$: set $M^V_{i, r}$ to zero\\
        return ($M^V$, $M^W$)\\
        \end{algorithm}
    \end{minipage}
    \vspace{-1em}
\end{wrapfigure}

In Appendix~\ref{appendix: binary_matrix_factorization}, we include detailed results %
from investigating the link between the neural network's generalization performance and the choice of mask factorization algorithm. We observe that while the exact solution to objective~\eqref{eq: objective} achieves a higher objective value than the greedy approach, it has no clear advantage in density estimation performance. Moreover, we found that models trained with the two objectives,~\eqref{eq: objective} and~\eqref{eq: objective_variance}, provide similar performance. However, some datasets might be more sensitive to the exact objective. For example, problems with anisotropic non-Gaussian structure may require neural architectures with more expressivity in some variables that may be favored with certain objectives. While we adopted objective~\eqref{eq: objective} and the efficient greedy algorithm in the remainder of our experiments, designing and comparing different factorization objectives is an important direction for future work.

\subsection{Structured Neural Networks for Normalizing Flows}
For the general real-valued probabilistic density estimation task, we highlight the use of StrNN in two popular normalizing flow frameworks, autoregressive flows and continuous-time flows. In the autoregressive flows setting, recall that the $j$th component of each flow layer is parameterized as $x_j = \tau_j(z_j; c_j(\mathbf{x}_{<\pi(j)}))$, where the conditioner $c_j$ dictates which inputs the latent variable can depend on. We use the StrNN as the conditioner, combined with any valid invertible transformer $\tau$, to form the \textbf{Structured Autoregressive Flow (StrAF)}. The StrNN conditioner ensures each transformed latent variable is only conditionally dependent on the subset of variables defined by a prescribed adjacency matrix. A single step from StrAF can be represented as: 
$\mathbf{z} = \text{StrAF}(\mathbf{x}) = \tau(\mathbf{x}, \text{StrNN}(\mathbf{x}, A))$ where $\mathbf{x} \in \mathbb{R}^d$ and $A \in \{0, 1\}^{d \times d}$ is an adjacency matrix. Normalizing flow steps are typically composed to improve the expressiveness of the overall flow. To ensure that the prescribed adjacency structure is respected throughout all flow steps, we simply avoid the common practice of variable order permutation between flow steps. We visualize this in Figure \ref{fig:gnf_comparison}. This choice does not hurt performance as long as a sufficiently expressive transformer is used. In our experiments, we apply the unconstrained monotonic neural network (UMNN) described in~\cite{NEURIPS2019_2a084e55} as a transformer.
 
\begin{wrapfigure}{r}{0.45\textwidth}
    \vspace{-2em}
    \centering
    \includegraphics[width=0.45\textwidth]{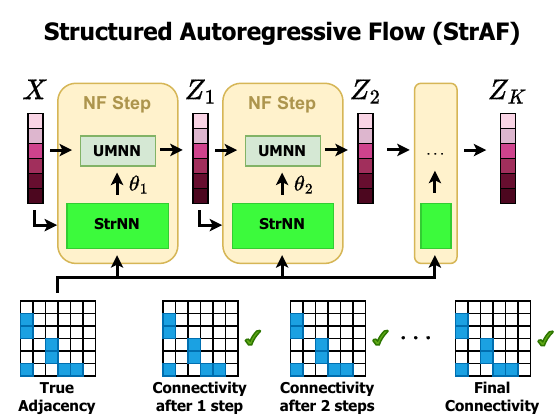}
    \caption{The StrAF injects a prescribed adjacency into each flow step using a StrNN conditioner. The StrAF does not permute latent variables, allowing the adjacency matrix to be respected throughout the entire flow.}
    \label{fig:gnf_comparison}
    \vspace{-1.5em}
\end{wrapfigure}

We also integrate the StrNN into continuous normalizing flows (CNFs), where the neural network that parameterizes $f$ in the  differential equation $\frac{\partial \mathbf{z}(t)}{\partial t} = f(\mathbf{z}(t), t)$ is replaced by a StrNN to obtain $\frac{\partial \mathbf{z}(t)}{\partial t} = \text{StrNN}(\mathbf{z}(t), t, A)$. This plug-in replacement allows us to inject structure into the function describing the continuous dynamics of the CNF without modifying other aspects of the CNF. We refer to this architecture as the \textbf{Structured CNF (StrCNF)}. The StrCNF uses the trace estimator described in FFJORD \citep{grathwohl2018ffjord} to evaluate the objective function for learning the flow.

While injecting structure, both StrAF and StrCNF inherit the efficiency of StrNN due to our choice of weight masking. Specifically, the output of the StrNN can be computed with a single forward pass through the network. In comparison, input masking approaches such as \cite{wehenkel2021graphical} must perform $d$ forward passes to compute the output for a single datum. This not only prevents efficient application of input masking to high dimensional data, but also is a barrier to integrating the method with certain architectures. For example, the CNF already requires many neural network evaluations to numerically solve the ODE defining the flow map, so making $d$ passes per evaluation is particularly inefficient.

\subsection{Structured Causal Autoregressive Flow}
\label{sec:structured_CAREFL}

Beyond density estimation, StrAF allows us to build autoregressive flows that faithfully represent variable dependencies defined by a causal DAG. In contrast, CAREFL~\citep{khemakhem2021causal} only maintains the autoregressive order, which is insufficient for data with more than two variables where the true structure must be characterized by a full adjacency matrix. Building on CAREFL, we also assume that the flow $\mathbf{T}$ takes on the following affine functional forms for observed data $\mathbf{x}$:
\begin{align}
\label{eq:affinef}
    x_j = e^{s_{j}(\mathbf{x}_{<\pi (j)})}z_{j} + t_j (\mathbf{x}_{<\pi (j)}), \quad j=1,...,d 
\end{align}
where the functions $s_j$ and $t_j$ are both conditioners that control the dependencies on the variables preceding $x_j$. It is crucial for the autoregressive or graphical structures to be maintained across all sub-transformations $\mathbf{T}_1, ..., \mathbf{T}_K$ for the flow $\mathbf{T} = \mathbf{T}_1 \circ \dots \circ \mathbf{T}_K$.

Assuming the true causal topology has been given either from domain experts or oracle discovery algorithms, StrAF can directly impose the given topological constraints by adding an additional masking step based on the adjacency matrix, as shown in section \ref{sec:StrNN}. This ensures that the dependencies of the flows match the known causal structure, which leads to more accurate inference predictions.

\section{Related Works} 
\label{sec:related}
Given a Bayesian network adjacency matrix, \citet{wehenkel2021graphical} introduced graphical conditioners to the autoregressive flows architecture through input masking. They demonstrated that unifying normalizing flows  with Bayesian networks showed promise in injecting domain knowledge while promoting interpretability, as even single-step graphical flows yielded competitive results in density estimation. Our work follows the same idea of introducing prior domain knowledge into autoregressive flows, but we instead use a masking scheme similar to methods in~\citet{germain2015made}. We again note that input masking scales poorly with data dimension $d$, as $d$ forward passes are required to obtain the output for a single datum.

\citet{EmbeddedModelFlows} proposed embedded-model flows, which alternates between traditional normalizing flows layers and gated structured layers that a) encode parent nodes based on the graphical model, and b) include a trainable parameter that determines how strongly the current node depends on its parent nodes, which alleviates error when the assumed graphical model is not entirely correct. In comparison, our work encodes conditional independence more directly in the masking step, improving accuracy when the assumption in the probabilistic graphical structure is strong. 

\citet{mouton2022graphical} applied a similar idea to residual flows by masking the residual blocks' weight matrices prior to the spectral normalization step according to the assumed Bayesian network. Similarly, \citet{weilbach20sccnf} introduced graphical structure to continuous normalizing flows by masking the weight matrices in the neural network that is used to parameterize the time derivative of the flow map. However, their method is difficult to apply to neural networks with more than a single hidden layer. In comparison, our factorization algorithm more naturally permits the use of multi-layered neural networks when representing CNF dynamics. The Zuko software package \citep{zuko} implements various types of normalizing flows, including MAFs that also rely on weight masking, but they only provide one possible algorithm to enforce autoregressive structure given a specific variable order. In comparison, our approach permits explicit optimization of different objectives during the adjacency matrix factorization step and we investigate the efficacy of factorization schemes and resulting neural architectures in our work. For completeness, we include pseudocode of their algorithm and comparisons to our own algorithms in Appendix \ref{appendix: binary_matrix_factorization} .

Flows have garnered increasing interest in the context of causal inference, with applications spanning various problem domains. \citet{ilse2021combining} parameterized causal model with normalizing flows in the general continuous setting to learn from combined observational and interventional data. \citet{melnychuk2022normalizing} used flows as a parametric method for estimating the density of potential outcomes from observational data. Flows have also been employed in causal discovery \citep{brouillard2020differentiable, khemakhem2021causal} as well as in various causal applications \citep{ding2023causalaf, wang2021harmonization}. In particular, \citet{balgi2022personalized} also considered embedding the true causal DAG in flows for interventional and counterfactual inference, but they do so via the framework of Graphical Normalizing Flows \citep{wehenkel2021graphical}. \citet{balgi2022counterfactual} used CAREFL on a real-world social sciences dataset leveraging a theorized Bayesian network.

\section{Experiments}
To demonstrate the efficacy of encoding structure into the learning process, we show that using StrNN its flow integrations to enforce a prescribed adjacency structure improves performance on density estimation and sample generation tasks. We experiment on both synthetic data generated from known structure equations and MNIST image data. Details on the data generation process for all synthetic experiments can be found in Appendix \ref{appendix: synthetic_data_gen}. We also apply StrAF in the context of causal inference and demonstrate that the additional graphical structure introduced by StrAF leads to more accurate interventional and counterfactual predictions. 
The code to reproduce these experiments is available at \url{https://github.com/rgklab/StructuredNNs}.

\subsection{Density Estimation on Binary Data}
\label{sec:experiment_binary_data}

\begin{figure}[h]
\includegraphics[width=\linewidth]{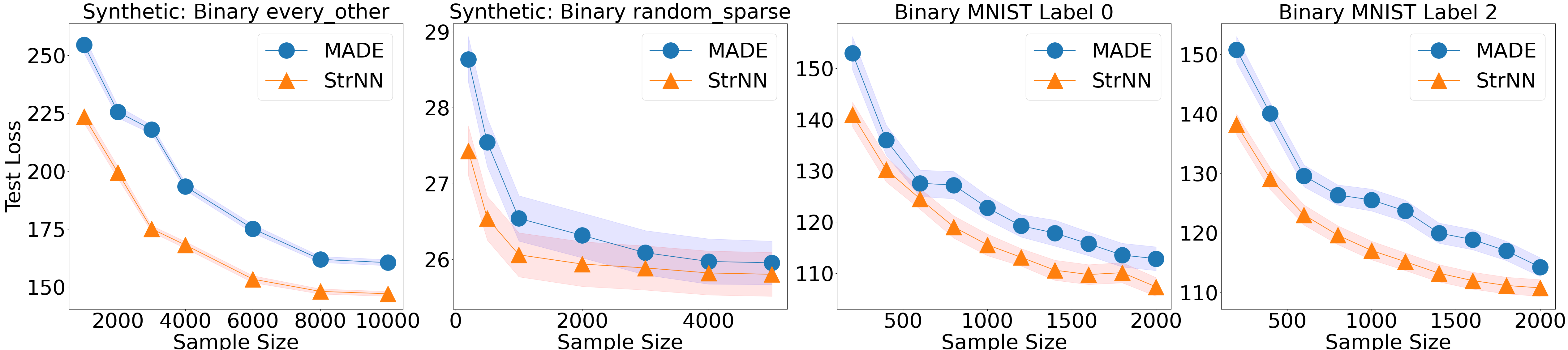} 
\caption{\small Negative log likelihood of a test set (lower is better) for density estimation experiments on binary synthetic data (left 2 images) and label-dependent binary MNIST data (right 2 images). Error ranges 
are reported as standard error across the test set. StrNN performs better than MADE as the sample size decreases.}
\label{fig:binary}
\end{figure}

For binary density estimation, we compare StrNN against the fully autoregressive MADE baseline. 

\textbf{Synthetic Tabular Data} We generate binary tabular data through structural equations from known Bayesian networks. The results are shown in Figure \ref{fig:binary} (left). We find that StrNN performs better than MADE, especially in the low data regime, as demonstrated on the left hand side of each chart. 

\textbf{MNIST Image Data} To study the effect of structure in image modeling, we use the binarized MNIST dataset considered in~\cite{germain2015made, Salakhutdinov2008OnTQ}. \cite{germain2015made} treated each 28-by-28-pixel image as a 784-dimensional data vector with full autoregressive dependence. Since we do not know the ground truth structure, we use StrNN to model a local autoregressive dependence on a square of a pixels determined by the hyper-parameter \texttt{nbr\_size}. By changing the hyperparameter we can increase the context window used to model each pixel. StrNN is equivalent to MADE for this experiment when we set \texttt{nbr\_size=28}. We first find the best \texttt{nbr\_size} for each label via grid search. For labels 0 and 2, the optimal \texttt{nbr\_size} is 10, and per-label density estimation results can be found in Figure~\ref{fig:binary} (right). StrNN outperforms MADE for both labels, with the advantage more significant when sample size is very small. Samples of handwritten digits generated from both StrNN and MADE can be found in Appendix \ref{appendix: additional_results}.

\begin{wrapfigure}{r}{0.5\textwidth}
\vspace{-2.4em}
\includegraphics[width=\linewidth]{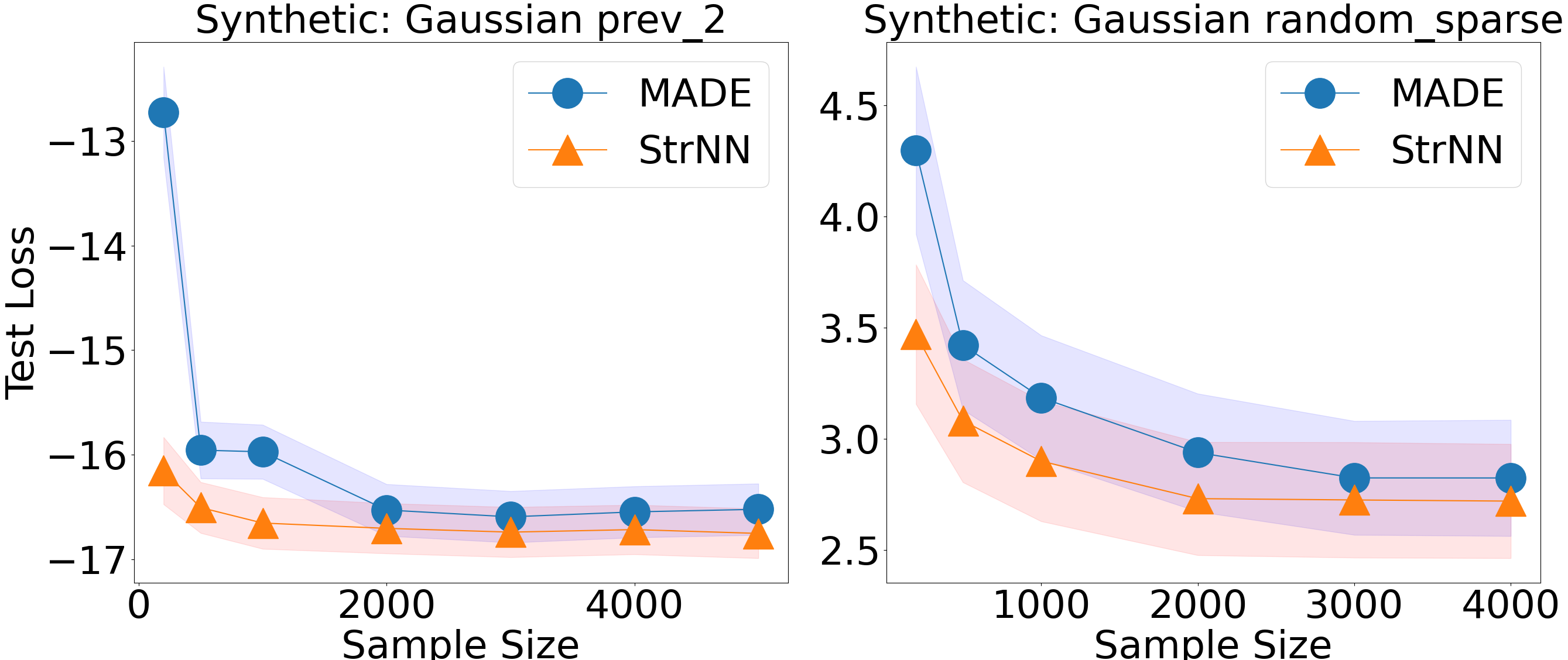} 
\caption{\small Results from density estimation experiments on Gaussian synthetic data generated from two different sparsity patterns. Test loss is reported in negative log likelihood with error ranges (standard error across test set). StrNN performs significantly better than MADE in the low data regime, and better on average.} 
\label{fig:gaussian_synth}
\vspace{-1.5em}
\end{wrapfigure}

\subsection{Density Estimation on Gaussian Data}
\label{sec:experiment_gaussian_data}

We run experiments to compare the performances of StrNN and MADE on synthetic Gaussian data generated from known structure equation models where each $x_i$ is Gaussian. 
We plot the results in Figure \ref{fig:gaussian_synth}. StrNN achieves lower test loss than MADE on average, although the error bars are not necessarily disjoint. When the sample size is low, however, StrNN significantly outperforms MADE, similar to the binary case. In conclusion, across all binary and Gaussian experiments, \textit{encoding structure} makes StrNN significantly more accurate at density estimation than the fully autoregressive MADE baseline.

\subsection{Density Estimation with Structured Normalizing Flows} \label{sec:straf_experiments}

We evaluate StrAF on density estimation against several baselines. We draw 1000 samples from a 15 dimensional tri-modal and non-linear synthetic dataset for experimental evaluation. Data generation is further described in Appendix \ref{appendix: flow_data_gen}.

\textbf{Experimental Setup} We select the fully autoregressive flow (ARF) and the Graphical Normalizing Flow (GNF) \citep{wehenkel2021graphical} as the most relevant discrete flow baselines for comparison. We use the official GNF code repository during evaluation, but note that it contains  design decisions that harm sample quality (see Appendix \ref{appendix: gnf}).  We also evaluate against FFJORD \citep{grathwohl2018ffjord} and \cite{weilbach20sccnf} as baselines for StrCNF. 
While other structured flows exist and have been examined in Section~\ref{sec:related}, they do not represent variables using an autoregressive structure and hence are less comparable.
Where applicable, models were provided the true adjacency matrix in their conditioners. All discrete flow models use a UMNN \citep{NEURIPS2019_2a084e55} transformer and we grid-search other hyperparameters as described in Appendix~\ref{appendix: nf_setup}. We evaluate density estimation performance using the negative log-likelihood (NLL). After fixing the hyperparameters, we perform 8 randomly initialized training runs, then report the mean and 95\% CI of the test NLL for these runs.

\begin{figure}[!h]
    \centering
    \includegraphics[width=\textwidth]{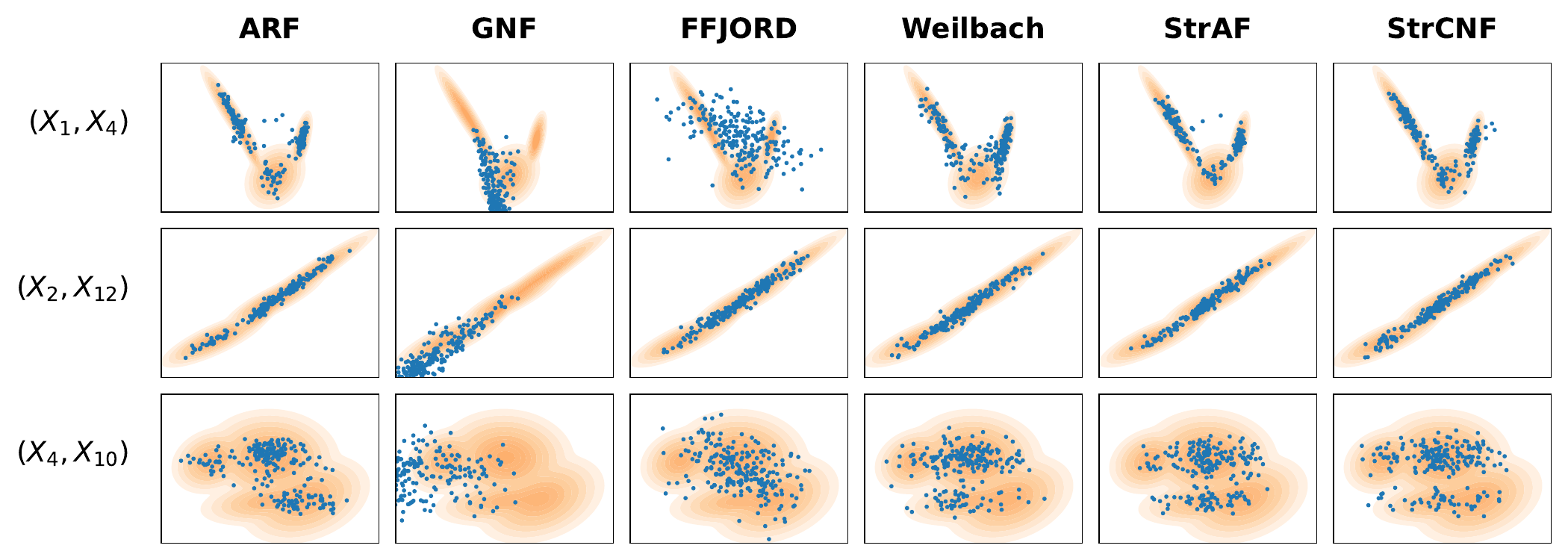}
    \caption{\small Model generated samples are shown in blue dots for randomly selected dimensions. The ground truth density is visualized by the orange contours. See Appendix \ref{appendix: gnf} for an explanation on why GNF performs poorly.}
    \label{fig:samples}
    \vspace{-0.75em}
\end{figure}

\textbf{StrNN improves flow-based models for density estimation}

We report results in Table \ref{tab:synthetic_comparison} and observe several trends. For both discrete and continuous flows, the ability to incorporate structure yields performance benefits compared to the ARF and FFJORD baselines. In particular, the StrNN offers advantages in comparison to baseline approaches that can encode structure. For example, while the method proposed by \cite{weilbach20sccnf} allows a DAG structure to \begin{wraptable}{r}{0.3\textwidth}
    \small
    \centering
    \caption{\small Evaluation of flow-based models. Mean and 95\% CI of test NLL over 8 runs reported.}
    \begin{tabular}{lcccc}
    \toprule
     & Test NLL ($\downarrow$) \\
     \midrule
    ARF \hspace{2em}& -3.09 $\pm$ 0.43 \\
    GNF & \textbf{-3.63 $\pm$ 0.35} \\
    StrAF & \textbf{-3.55 $\pm$ 0.20} \\
    \midrule
    FFJORD & -1.85 $\pm$ 0.64 \\
    Weilbach & -2.59 $\pm$ 0.58 \\
    StrCNF & \textbf{-4.01 $\pm$ 0.12} \\
    \bottomrule
    \end{tabular}
    \label{tab:synthetic_comparison}
    \vspace{-2em}
\end{wraptable}be injected into the baseline CNF, its inability to easily use a multi-layered neural network to represent dynamics hinders performance as compared to the StrCNF. We observe that StrAF and GNF perform comparably, and the StrCNF outperforms all other models.  We visualize the quality of samples generated by these flow models in Figure \ref{fig:samples}, and find that both StrAF and StrCNF yields samples that closely match the ground truth distribution.

\subsection{Causal Inference with Structured Autoregressive Flows}
\label{sec:experiment_causal}

We conduct synthetic experiments where we generate data according to a linear additive SEM. Details on data generation can be found in Appendix \ref{appendix: causal_data}. In our experiments involving 5- and 10-variable SEMs, we compare StrAF against CAREFL, which only utilizes MADE as the conditioner and maintains autoregressive ordering of the variables. This comparison highlights the additional benefits of enforcing the generative model to be faithful to the causal graph, which is a feature unique to StrAF. Furthermore, unlike the previous work conducted by \cite{khemakhem2021causal} that evaluates causal queries on individual variables alone, we propose a comprehensive evaluation metric called total mean squared error (MSE) for these two causal tasks as outlined below. We report the mean errors along with the standard deviations from multiple training runs with different datasets. The formulations of the metrics can be found in Appendix \ref{appendix: causal_metrics}. In addition, Appendix \ref{appendix: causal_algorithms} provides detailed algorithms and related discussions on generating interventional samples and computing counterfactuals with flows.

\begin{figure}[ht]
\begin{subfigure}{0.49\textwidth}
    \centering
    \includegraphics[width=\textwidth]{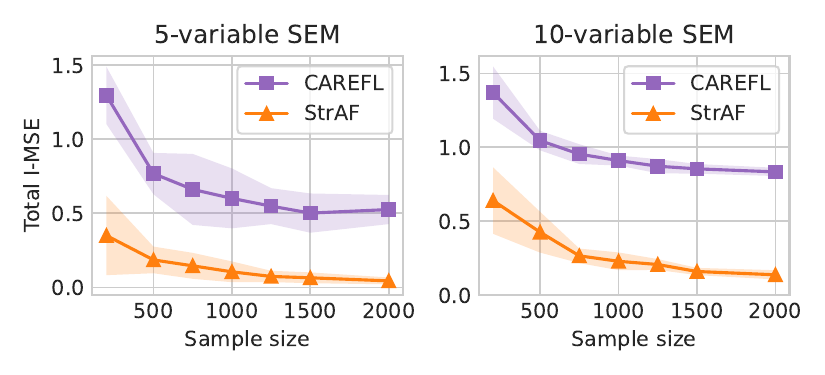}
    \vspace{-2em}
    \caption{Interventions}
    \label{fig:intervention_figure}
\end{subfigure}
\begin{subfigure}{0.49\textwidth}
    \centering
    \includegraphics[width=\textwidth]{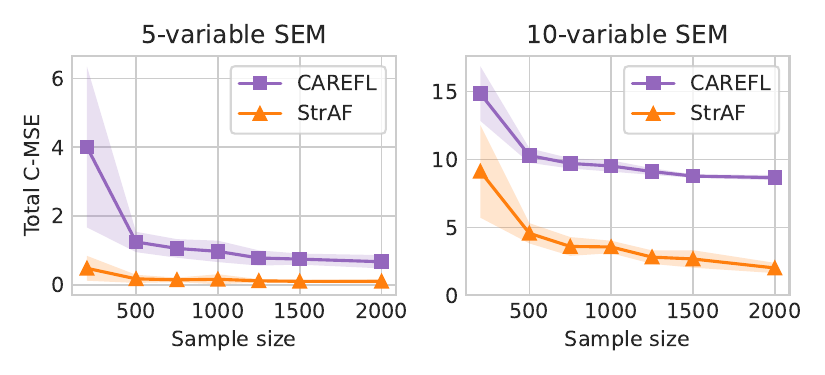}
    \vspace{-2em}
    \caption{Counterfactuals}
    \label{fig:counterfactual_figure}
\end{subfigure}
\caption{\small Evaluations of causal predictions (\textbf{left}: interventions; \textbf{right}: counterfactuals) on 5- and 10-variable SEMs made by StrAF and CAREFL. Performance is measured by the corresponding total mean squared error with standard deviation across ten runs. (a) measures the error of the expected value of one variable under different interventions, while (b) computes the error by deriving counterfactual values under different observed samples and queries.}
\end{figure}

\textbf{Interventions.} We perform interventions on each variable $x_j$ within the SEM and draw samples from the intervened causal system. Interventions are performed by setting the intervened value $\alpha$ using one of eight integers perturbed from the mean of the intervened variable. We compute the expectations $\mathbb{E}[x_i|do(x_j=\alpha)]$ for variable $x_i$, excluding the intervened variable $x_j$ itself and its preceding variables as they remain unaffected. To estimate these expectations, we use 1000 samples and calculate the squared error for the predictions. The resulting error is then averaged over the set of intervened values, intervened variables, and the corresponding variables with computed expectations under interventions. We present the plot of this aggregated metric, referred to as the total intervention mean squared error (total I-MSE), for the two SEMs in Figure \ref{fig:intervention_figure}. Notably, StrAF consistently outperforms CAREFL across all training dataset sizes, highlighting the effectiveness of the additional graph structure in improving StrAF's inference of the interventional distribution.

\textbf{Counterfactuals.} In a similar setting as the intervention experiments, we evaluate StrAF and CAREFL on their ability to compute accurate counterfactuals conditioned on observed data. Counterfactual inference tackles what-if scenarios: determining the value of variable $x_i$ if variable $x_j$ had taken a different value $\alpha$. Unlike interventions, counterfactual queries involve deriving the latent variables $\mathbf{z}$ given the observed data $\mathbf{x} = \mathbf{x}_{\text{obs}}$, rather than sampling new $\mathbf{z}$. We generate 1000 observations using the synthetic SEM and derive counterfactual values $\mathbf{x}$ by posing queries with varying $\alpha$ values for each variable $x_j$. Similar to interventions, we compute the squared error and average over the $1000$ observed samples and all possible combinations of counterfactual queries for each observed sample, and we refer to this metric as the total counterfactual mean squared error (total C-MSE). Figure \ref{fig:counterfactual_figure} illustrates that, for both SEMs, StrAF outperforms CAREFL in making more accurate counterfactual predictions. Moreover, StrAF demonstrates consistent performance even in scenarios with limited available samples.

\section{Conclusions and Limitations}

We introduce the \textbf{Structured Neural Network (StrNN)}, which enables us to encode functional invariances in arbitrary neural networks via weight masking during learning. For density estimation tasks where the true dependencies are expressed via Bayesian networks (or adjacency matrices), we show that StrNN outperforms a fully autoregressive MADE model on synthetic and MNIST data. We integrate StrNN in autoregressive and continuous flow models to improve both density estimation and sample quality. Finally, we show that our structured autoregressive flow-based causal model outperforms existing baselines on causal inference. We address some limitations and directions for future work below.

\textbf{Access to true adjacency structure.} We assume access to the true conditional independence structure. While this information is available in many contexts, such as from domain experts, there is also a wide literature on learning the conditional independencies directly from data~\citep{drton2017structure}. One prominent example is the NO-TEARS algorithm~\citep{zheng2018dags}. \citet{wehenkel2021graphical} shows that integrating NO-TEARS with an autoregressive flow can improve density estimation when a ground truth adjacency is unknown. Any adjacency matrix learned from data can also be integrated in StrAF. In our work, we have found two existing causal structure discovery libraries that are relatively comprehensive and easy to use: \cite{kalainathan2020causal} and \cite{causallearn}. Future work will use StrNN to directly learn structure from data, providing a full pipeline from structure discovery to density estimation and sample generation. 

\textbf{StrNN optimization objectives.}
The mask factorization algorithm used by StrNN can maximize different objectives while ensuring the matrices satisfy a sparsity constraint. In Section \ref{sec:StrNN}, we proposed two such objectives and in Appendix~\ref{appendix: binary_matrix_factorization} we demonstrated that they can impact model generalization. StrNN provides a framework with which it is possible to explore other objectives to impose desirable properties on neural network architectures. Moreover, investigating the effect of sparse structure on faster and easier training is a valuable direction~\citep{frankle2018lottery}. For example, one approach would be to leverage connections between dropout and the lottery ticket hypothesis to randomly introduce  sparsity into a neural network using StrNN weight masking.

\vspace{-0.5 em}
\acksection
\vspace{-0.8 em}
We thank David Duvenaud, Tom Ginsberg, Vahid Balazadeh-Meresht, Phil Fradkin, and Michael Cooper for insightful discussions and draft reviewing.
We thank anonymous reviewers for feedback that has greatly improved the work. This research was supported by an NSERC Discovery Award RGPIN-2022-04546, and a Canada CIFAR AI Chair. AC is supported by a DeepMind Fellowship and RS is supported by the Ontario Graduate Scholarship and the Ontario Institute for Cancer Research. Resources used in preparing this manuscript were provided in part, by the Province of Ontario, the Government of Canada through CIFAR, and companies sponsoring the Vector Institute.

\newpage
\bibliographystyle{unsrtnat}
\bibliography{refs}

\newpage 
\appendix
\addcontentsline{toc}{section}{Appendix} %
\part{Appendix} %
\parttoc %
The code used in this project can be found at:  \url{https://github.com/rgklab/StructuredNNs}.

\section{Mask Algorithms and Binary Matrix Factorization}
\label{appendix: binary_matrix_factorization}

\subsection{MADE Algorithm and Limitations}
\label{appendix:MADE}

As mentioned in the main text, the central idea of masking neural network weights to inject variable dependence was inspired by the work of Germain et al.~\cite{germain2015made}. In their paper, the authors proposed Algorithm~\ref{alg:MADE} to ensure the outputs of an autoencoder are autoregressive with respect to its inputs.

\begin{algorithm}[hbt!]
\caption{MADE Masking Algorithm }\label{alg:MADE}
    \SetKwInOut{Input}{Input}
    \SetKwInOut{Output}{Output}
    \Input{Dimension of inputs $d$, Number of hidden layers $L$, Number of hidden units $h$}
    \Output{Masks $M^1,\dots,M^{L+1}$} 

    \% Sample $\mathbf{m}^l$ vectors\;
    $\mathbf{m}^0 \leftarrow \text{shuffle}([1,\dots,d])$\;
    \For{$l=1$ to $L$}{
        \For{$k=1$ to $h^l$}{
           $\mathbf{m}^l(k) \leftarrow \text{Uniform}([\min(\mathbf{m}^{l-1}),\dots,d-1])$\;
        }
    }
    \% Construct masks matrices\;
    \For{$l = 1$ to $L$}{
        $M^{l} \rightarrow \mathds{1}_{\mathbf{m}^l \geq \mathbf{m}^{l-1}}$\;
    }
    $M^{L+1} \rightarrow \mathds{1}_{\mathbf{m}^0 > \mathbf{m}^L}$\;
\end{algorithm}

As a concrete example, let us consider the case of a single hidden layer network with $d$ inputs and $h$ hidden units. Here, Algorithm~\ref{alg:MADE} first defines a permutation $\mathbf{m}^0 \in \R{d}$ of the set $\{1,\dots,d\}$, and then independently samples each entry in the vector $\mathbf{m}^1 \in \R{h}$ with replacement from the uniform distribution over the integers from  $1$ to $d-1$. This assignment is used to define the two binary masks matrices $M^1$ of size $h \times d$ and $M^2$ of size $d \times h$. The matrix product of the resulting masks $M^2M^1 \in \mathbb{R}^{d \times d}$ provides the network's connectivity. In particular, the $(i,j)$ entry of $M^2M^1$ indicates the dependence of output $i$ on input $j$. 

There are several key limitations of the MADE algorithm:
\begin{enumerate}[leftmargin=*] \itemsep0pt
    \item As mentioned in Section~\ref{sec:StrNN}, the MADE algorithm can only mask neural networks such that they respect the autoregressive property. It is not capable of enforcing additional conditional independence statements as prescribed by an arbitrary Bayesian network. For a general probability distribution that does not satisfy any conditional independence properties, we expect each marginal conditional in the factorization of the density $p(x) = \prod_{k=1}^d p(x_k|x_{<k})$ to depend on all previous inputs. As a result, the matrix product $M^2M^1$ should be fully lower-triangular, meaning that output $k$ depends on all inputs $1,\dots,k-1$. If there is conditional independence structure, however, the MADE algorithm does not provide a mechanism to define mask matrices such that their matrix product is sparse and hence the corresponding MADE network enforces these constraints on the variable dependence. %
    \item The non-deterministic MADE masking algorithm presented in~\cite{germain2015made}, the resulting mask matrices are not always capable of representing any distribution. In particular, the random algorithm can yield some mask matrices where the lower-triangular part of their matrix product is arbitrarily sparse, i.e., there exists some $k < k'$ such that $(M^2M^1)_{k,k'} = 0$. As a result, the MADE network with these masks enforces additional conditional independencies that are not necessarily present in the underlying data distribution. Proposition~\ref{prop:randomMADE} formalizes this point.
\end{enumerate}

\begin{proposition} \label{prop:randomMADE}
    There is a non-zero probability that Algorithm~\ref{alg:MADE} will yield masks that enforce unwanted conditional independencies.
\end{proposition}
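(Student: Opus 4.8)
The plan is to prove the claim by exhibiting a single explicit configuration on which the random algorithm fails with positive probability; since the proposition only asserts the existence of a \emph{non-zero} failure probability, one concrete instance suffices. I would fix $L=1$ and a network with $d \ge 3$ inputs and $h$ hidden units (a special case of Algorithm~\ref{alg:MADE}), and write the two sampled masks entrywise as
\[
M^1_{k,j} = \mathds{1}_{m^1(k) \ge m^0(j)}, \qquad M^2_{i,k} = \mathds{1}_{m^0(i) > m^1(k)},
\]
where $m^0$ is the fixed input permutation of $\{1,\dots,d\}$ and each label $m^1(k)$ is drawn independently and uniformly from $\{1,\dots,d-1\}$.

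The key algebraic step is to evaluate the connectivity matrix $M^2 M^1$ in closed form. Multiplying the two indicator masks gives
\[
(M^2 M^1)_{i,j} = \sum_{k=1}^{h} \mathds{1}_{m^0(j) \le m^1(k) < m^0(i)},
\]
so the $(i,j)$ entry counts exactly the hidden units whose label lies in the half-open interval $[m^0(j), m^0(i))$. In particular, output $i$ depends on input $j$ if and only if at least one hidden unit receives a label in this interval, and such dependence can only arise when $m^0(i) > m^0(j)$.

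Next I would isolate the narrowest possible interval to make a dependence easy to destroy. For any value $v \in \{1,\dots,d-1\}$, let $j$ and $i$ be the unique indices with $m^0(j)=v$ and $m^0(i)=v+1$, which exist because $m^0$ is a permutation. The interval $[v,v+1)$ contains only the integer $v$, so $(M^2 M^1)_{i,j}$ equals the number of hidden units with label exactly $v$. Since $m^0(j)=v < v+1 = m^0(i)$, a fully expressive (generic) model requires output $i$ to depend on input $j$; hence $(M^2 M^1)_{i,j}=0$ is precisely an unwanted conditional independence. The event that no hidden unit ever draws the label $v$ has probability $\bigl(\tfrac{d-2}{d-1}\bigr)^{h}$, which is strictly positive for every finite $h$ and every $d \ge 3$, and on this event the masks enforce an independence absent from a generic target distribution, establishing the claim.

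I expect the only delicate point to be the bookkeeping between the ordering labels $m^0$ and the matrix indices: one must verify that a missing label in the hidden layer produces a zero in the strictly \emph{lower}-triangular part of the connectivity matrix (an unwanted independence) rather than in the upper-triangular part (where zeros are expected and harmless). Selecting the consecutive labels $v$ and $v+1$ makes this correspondence transparent and removes any need for case analysis over interval lengths.
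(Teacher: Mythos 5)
Your proof is correct, and it follows the same basic strategy as the paper's --- exhibit a concrete sampling event of positive probability under which the product $M^2M^1$ has a spurious zero in its strictly lower-triangular part --- but the event you choose is different and yields a sharper result. The paper takes the very restrictive event $\mathbf{m}^1 = \mathbf{1}$ (all hidden units labelled $1$), which occurs with probability $(d-1)^{-h}$ and forces every later output to depend only on the first variable in the ordering. You instead observe, via the closed form $(M^2M^1)_{i,j} = \sum_{k}\mathds{1}_{m^0(j) \le m^1(k) < m^0(i)}$, that the consecutive pair $m^0(j)=v$, $m^0(i)=v+1$ is disconnected exactly when no hidden unit draws the label $v$, an event of probability $\bigl(\tfrac{d-2}{d-1}\bigr)^{h}$, which is far larger and in fact close to $1$ for large $d$. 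Your argument also implicitly shows something the paper's does not: whenever $h < d-1$ the labels cannot cover all of $\{1,\dots,d-1\}$, so some unwanted independence is enforced with probability one, not merely positive probability. Your attention to the bookkeeping between ordering labels and matrix indices (ensuring the zero lands below the diagonal) is exactly the right point to be careful about, and choosing consecutive labels handles it cleanly. One cosmetic note: the paper states the probability of its event as $1/d^h$, but since $\mathbf{m}^1(k)$ is drawn uniformly from $\{1,\dots,d-1\}$ it should be $(d-1)^{-h}$; your $\bigl(\tfrac{d-2}{d-1}\bigr)^h$ uses the correct sample space.
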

\begin{proof} For a single hidden layer ($L = 1$) neural network with $h$ units, there is a probability $1/d^h > 0$ of sampling $\mathbf{m}^1 = \mathbf{1}$, i.e., each entry is independently sampled to be $1$. This vector yields a mask matrix $M^{1}$ that only has one non-zero column of ones at the index $k$ where $\mathbf{m}^0(k) = 1$. As a result, the matrix product $M^2M^1$ also has only one non-zero column at index $k$, meaning that all outputs $k' > k$ depend only on $x_k$ and not on other input variables. Therefore, these mask matrices enforce the constraints $X_{k'} \ci X_{<k} | X_k$ for all $k'$. Equivalently, a distribution that does not satisfy these constraint can not be represented using this MADE network.
\end{proof}

In our work, we adopt a weight masking scheme by solving a binary matrix factorization problem that overcomes these limitations. Both the globally optimal and approximate solutions proposed in our paper are deterministic. Thus, we enforce all conditional independence properties exactly and ensure unwanted variable independence statement do not appear in our neural networks.

\subsection{Mask Factorization: Integer Programming and Greedy Algorithm}
\label{appendix:greedy_algo}

In Section~\ref{sec:StrNN}, we showed that finding the weight masks for each neural network layer is equivalent to factoring the adjacency matrix that represents the input and output connectivity of these layers. We can find exact solutions to this problem by solving the optimization problem given in~\eqref{eq: objective}. This problem can be formalized as the following integer programming problem:

\begin{align}
    \label{eq: IP_formulation}
    &\text{Inputs: } A \in \{0, 1\}^{d\times d}\\
    &\text{Outputs: } M^V\in \{0, 1\}^{d\times h}, M^W \in \{0, 1\}^{h\times d} \nonumber \\
    \max &\sum_{i=1}^{d}\sum_{j=1}^d \ v_i w_j \nonumber\\
    \text{ such that }& v_iw_j > 0 \text{ if } A_{ij}=1 \nonumber \\
    & v_iw_j =0 \text{ if } A_{ij}=0 \nonumber \\
    &\text{ where } M^V = \begin{pmatrix}
        v_1 \\ v_2 \\ ... \\ v_d
    \end{pmatrix} \text{ and } v_i \in \{0, 1\}^{1\times h} \nonumber \\
    &\text{ and } 
    M^W = \begin{pmatrix}
        w_1 & w_2 & ... & w_d
    \end{pmatrix} \text{ and } w_j \in \{0, 1\}^{h\times 1} \nonumber
\end{align}

To formulate a similar problem for the objective given in~\eqref{eq: objective_variance} instead, we simply replace the integer programming objective with 
\begin{align}
    \label{eq:IP_formulation_var}
    \max{(\sum_{i=1}^{d}\sum_{j=1}^d \ v_i w_j  - \text{Var}_{i, j}(v_i w_j))}.
\end{align}

We used the Gurobi optimizer~\cite{gurobi} to solve the above integer programming problems in our experiments, and found that exact solutions are found reasonably quickly for up to $d=20$. For dimensions larger than 20, however, directly solving the integer programming problem becomes prohibitively expensive even on computing clusters with multiple GPUs, so it is intractable to seek exact solutions to these problems for most real-world datasets. Therefore, in this work we propose Algorithm~\ref{alg:mf}, a greedy method that gives an approximate solution to the problem~\ref{eq: IP_formulation} very efficiently. Figure~\ref{fig:greedy_visualization} provides a visual example of the steps performed by Algorithm~\ref{alg:mf}.

\begin{figure}[h]
  \centering

  \begin{subfigure}{\textwidth}
    \centering
    \includegraphics[width=\textwidth]{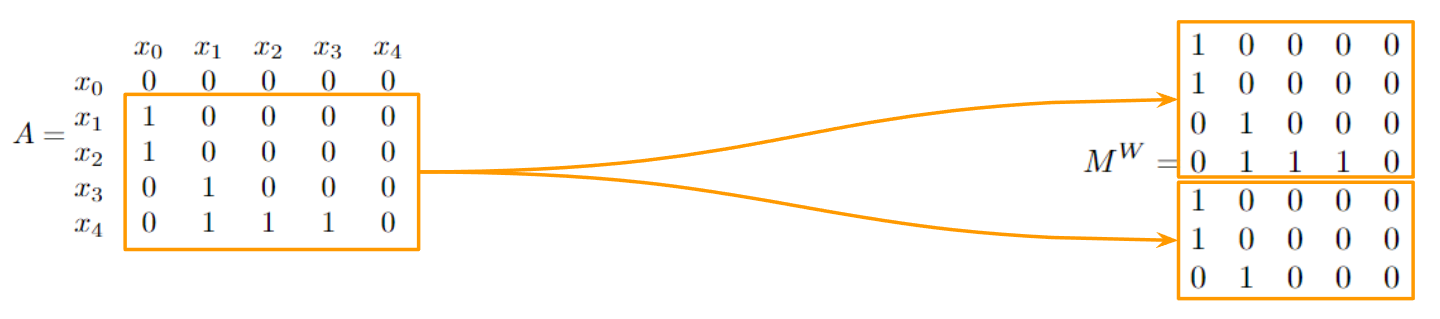}
    \caption{\small Step 1: Given the adjacency matrix $A$, we first populate the first layer mask $M^W$ by copying over non-zero rows in $A$, and repeating until all rows of $M^W$ are full.}
    \label{fig:step1}
  \end{subfigure}
\end{figure}
\begin{figure}
    \ContinuedFloat
  \begin{subfigure}{\textwidth}
    \centering
    \includegraphics[width=\textwidth]{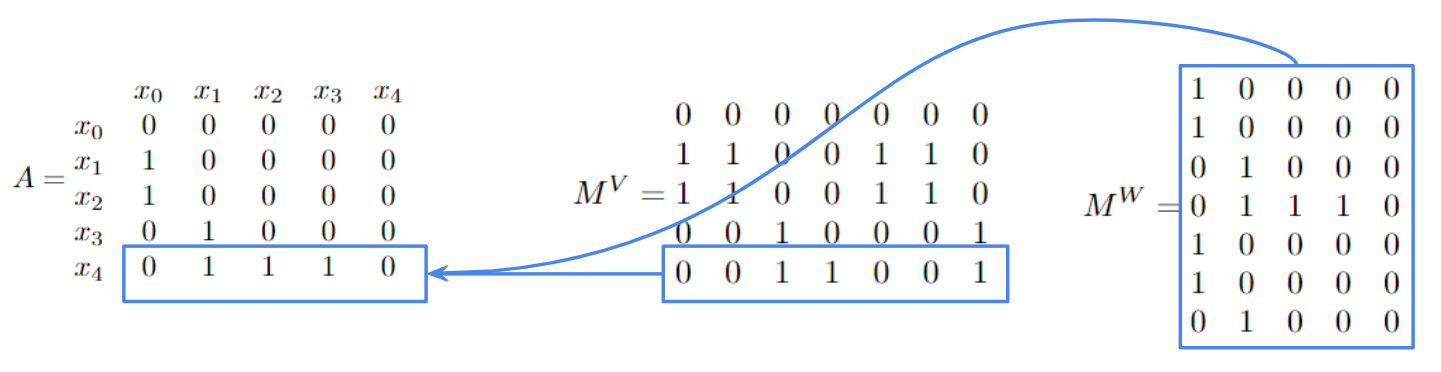}
    \caption{\small Step 2: Next, we populate the second layer mask $M^V$. Let us take the last row of $M^V$ for an example: to respect all conditional independence statements given by $A$, we need the product of the last row of $M^V$ and the $M^W$ matrix to have the same zero and non-zero locations as the last row of $A$. Since there are zeros in the first and last column of $A$'s last row, we need the products of the last row of $M^V$ with the first and last columns of $M^W$ to be zero.}
    \label{fig:step2}
  \end{subfigure}

    \begin{subfigure}{\textwidth}
    \centering
    \includegraphics[width=\textwidth]{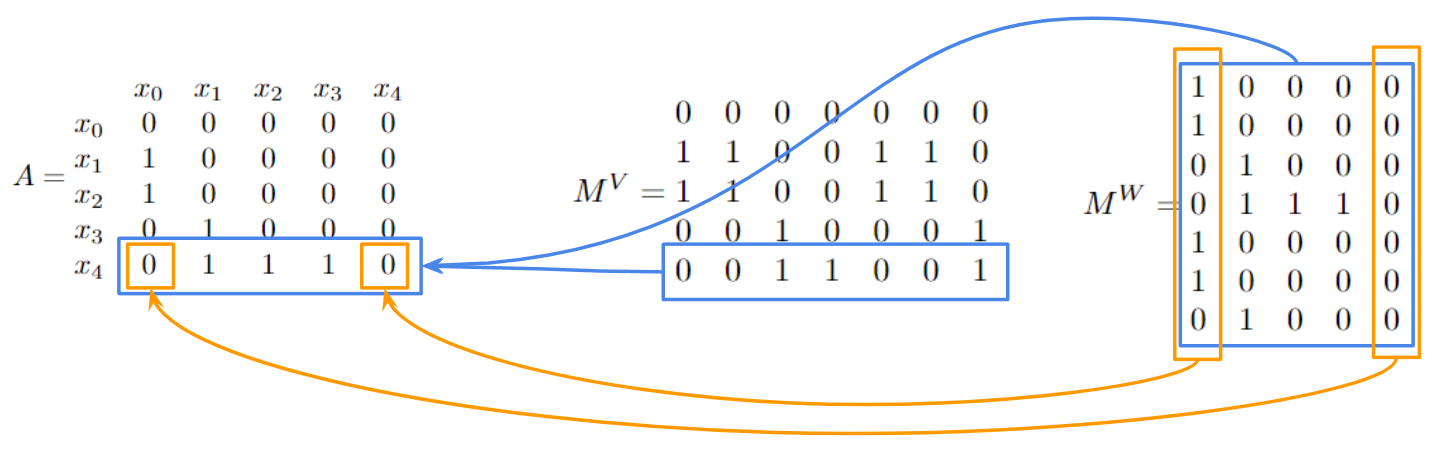}
    \caption{\small Step 3: We find the unique ones in the first and last columns of $M^W$ and set the corresponding positions in the last row of $M^V$ to zero.}
    \label{fig:step2-2}
  \end{subfigure}

    \begin{subfigure}{\textwidth}
    \centering
    \includegraphics[width=\textwidth]{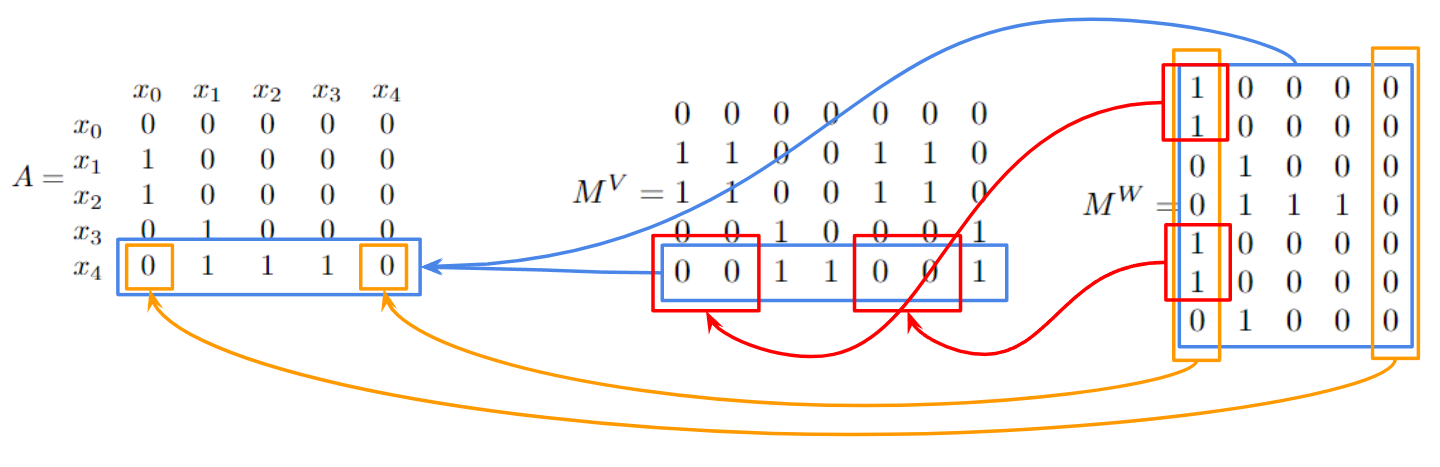}
    \caption{\small Step 4: Everything else in the last row of $M^V$ is set to $1$ to maximize the number of connections in the resulting neural network for the optimization objective in~\eqref{eq: objective}.}
    \label{fig:step2-3}
  \end{subfigure}

  \caption{\small A visual example of Algorithm 1 being applied on the adjacency matrix A for a neural network with $d=5$ inputs, $d=5$ outputs, and one single hidden layer containing $h=7$ hidden units.}
  \label{fig:greedy_visualization}
\end{figure}
\newpage
To estimate how well the greedy algorithm approximates the solution to problem~\ref{eq: IP_formulation}, we randomly sample lower triangular adjacency matrices, setting entries to 0 or 1 based on a given sparsity threshold  between 0 and 1. In other words, for the threshold $0.1$, the random adjacency matrix is very dense, and when the threshold is $0.9$, it is very sparse. For fixed input and output dimensions, we sample 10 such random adjacency matrices for each sparsity threshold ranging from $0.1$ to $0.9$, and take the average of the $\sum_{i=1}^{d}\sum_{j=1}^d \ v_i w_j$ objective value obtained by each factorization algorithm. Results for $d=10$-dimensional adjacency matrices are shown in Figure~\ref{fig:mask_connection_comparisons}. We see that the exact integer programming solution achieves higher objective values compared to the greedy algorithm we propose in Algorithm~\ref{alg:mf}, but it remains to further evaluate the resulting masks from both algorithms on their performance for a downstream density estimation task.

\begin{figure}[h]
  \centering

    \begin{subfigure}{\textwidth}
        \centering
        \includegraphics[width=\textwidth]{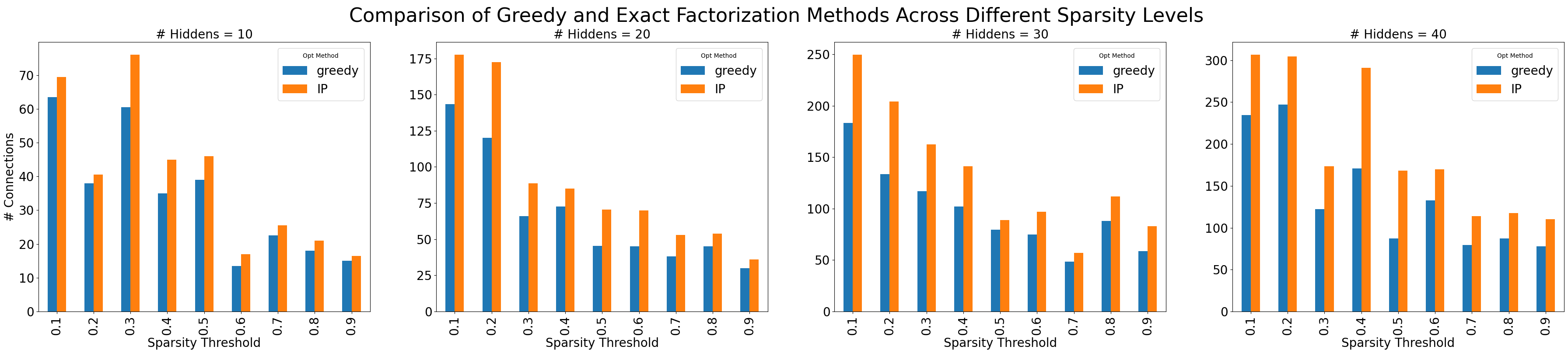}
        \caption{\small Randomly generated adjacency structures of 10 dimensions.}
    \end{subfigure}

    \begin{subfigure}{\textwidth}
        \centering
        \includegraphics[width=\textwidth]{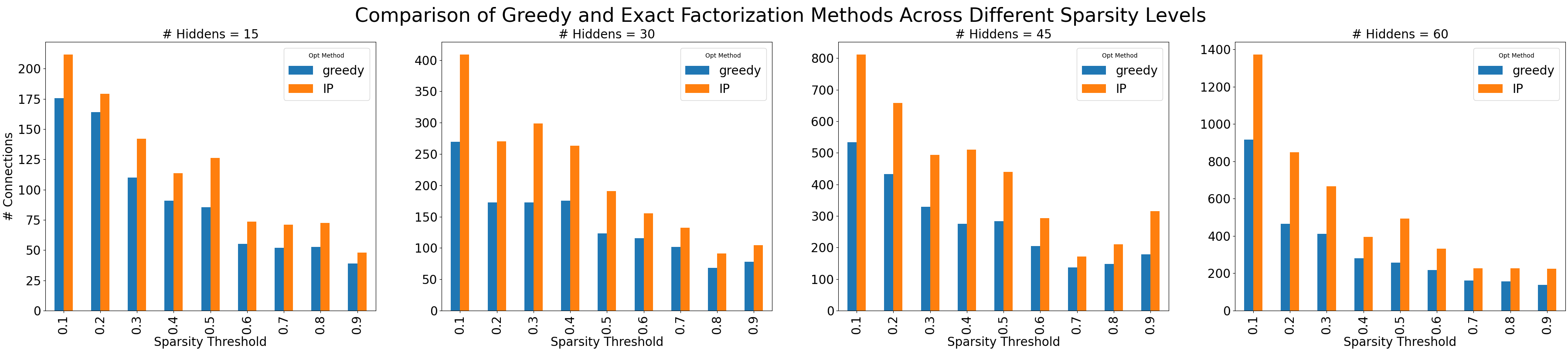}
        \caption{\small Randomly generated adjacency structures of 15 dimensions.}
    \end{subfigure}
    \caption{\small Comparing objective value (Equation~\ref{eq: objective}) achieved by greedy and exact integer programming (IP) methods. IP gives better objective values when the adjacency matrix is very sparse. As the number of neurons involved goes up, the difference in these methods also increases.}
    \label{fig:mask_connection_comparisons}
\end{figure}

\subsection{Mask Algorithms and Generalization}
\label{appendix:mask_general}

To that end, we evaluate the density estimation performance of masked neural networks on 20-dimensional synthetic binary data, using the following four mask factorization methods:
\begin{enumerate}[leftmargin=*]
    \item \textbf{MADE}: A fully autoregressive baseline using  Algorithm~\ref{alg:MADE} as proposed in~\cite{germain2015made} 
    \item \textbf{Greedy}: The proposed method in Algorithm~\ref{alg:mf}
    \item \textbf{IP}: The exact integer programming solution to Problem~\ref{eq: IP_formulation}
    \item \textbf{IP-var}: The exact integer programming solution to Problem~\ref{eq: IP_formulation}, but with the objective in~\eqref{eq:IP_formulation_var}
    \item \textbf{Zuko}: The approximate factorization algorithm proposed in~\citep{zuko} Github repository, as per our understanding. (See Algorithm~\ref{alg:zuko} for pseudocode and Appendix~\ref{appendix:zuko_comp} for a more detailed description of the Zuko method and its comparison to Algorithm~\ref{alg:mf}.)
\end{enumerate}

\begin{algorithm}[hbt!]
\caption{Zuko Masking Algorithm }\label{alg:zuko}
    \SetKwInOut{Input}{Input}
    \SetKwInOut{Output}{Output}
    \Input{A: $\{0,1\}^{d\times d}$: adjacency matrix, \\
    H = ($h_1$, ..., $h_{L-1}$)$\in \mathbb{N}^{L-1}$: list of $L-1$ hidden layer sizes}
    \Output{Masks $M_1 \in \{0,1\}^{h_1\times d}, M_2 \in \{0,1\}^{h_2 \times h_1}, \dots,M_{L}\in\{0,1\}^{d\times h_{L-1}}$,\\
    satisfying $M_L \times \dots \times M_2 \times M_1 \sim A$} 

    $A'$ $\leftarrow$ unique rows of $A$\;
    $inv$ $\leftarrow$ mapping of rows of $A'$ to the original row indices in $A$\;
    $P' \leftarrow A' \ \times \ A'.T$\;
    $n\_deps \leftarrow$ sums of rows of $A'$\;
    $P \leftarrow P' == n\_deps$\;
    \For{$i=1$ to $L-1$}{
        \eIf{$i = 0$}{
            $M_i \leftarrow A'$\;
        }{
            $M_i \leftarrow$ rows of $P$ denoted by $indices$\;
        }
        \eIf{$i < L-1$}{
            $reachable \leftarrow n\_deps \neq 0$\;
            $indices \leftarrow$ $reachable$ indices repeated to fill up to $h_i$\;
            $M_i \leftarrow M_i[indices]$\;
        }{
            $M_i \leftarrow M_i[inv]$
        }
    }
\end{algorithm}

The experiment setup and grid of hyperparameters used for these experiments are the same as those in all binary and Gaussian experiments in this work, as detailed in Appendix~\ref{sec:exp_setup_binary_gaussian}. Specifically, the adjacency structures used in these experiments are explained and visualized in~\ref{appendix: adjacency} and~\ref{appendix: binary_data_gen}. The results for the negative log-likelihood of a test dataset are reported in Table~\ref{tab:mask_algos_density_results}. We see that all three methods proposed and referenced in this work---Greedy, IP, IP-var, and Zuko---outperform the MADE baseline, but there is no clear winner based on overlapping error ranges. IP does not perform significantly better than Greedy based on the higher objective value achieved for~\eqref{eq: objective}. Meanwhile, there is no significant difference between the objective that maximizes the total connections (Equation~\ref{eq: objective} and the objective with the added variance penalty (Equation~\ref{eq: objective_variance}) when comparing the performance of IP versus IP-var. Hence, for efficiency and overall performance, we choose to adopt the Greedy mask factorization algorithm for the rest of the experiments in this paper.

\begin{table}[h]
\centering
{
\caption{\small Density estimation results on 20-dimensional synthetic datasets, reported using the negative log-likelihood on a held-out test set (lower is better). The error reported is the sample error across the test set. The four methods, Greedy, IP, IP-var, and Zuko perform better than the MADE baseline, but similarly to each other.}
  \begin{tabular}{lcccccc}
    \toprule
    \multirow{2}{*}{Dataset} &
      \multicolumn{2}{c}{Random Sparse} &
      \multicolumn{2}{c}{Previous 3} \\
     & {$n = 5000$} & {$n=2000$} & {$n=5000$} & {$n=2000$}\\
      \midrule
   MADE & $7.790 \pm 0.140$ & $7.788 \pm 0.142$ & $8.767 \pm 0.132$ & $8.816 \pm 0.134$ \\
   \midrule
   Greedy & $7.758\pm0.137$ & $7.778\pm0.142$ & $8.757\pm0.131$ & \textbf{8.768 $\pm$ 0.130} \\
   IP & $7.758\pm0.138$ & $7.769\pm 0.140$ & \textbf{8.755 $\pm$ 0.132} & $8.769\pm0.129$ \\
   IP-var & \textbf{7.757 $\pm$ 0.137} & \textbf{7.768 $\pm$ 0.140} & $8.758\pm 0.132$ & $8.770 \pm 0.131$\\
   Zuko & $7.758\pm0.137$ & $7.776\pm0.141$ & $8.757\pm0.130$ & \textbf{8.768 $\pm$ 0.129} \\
   \bottomrule
  \end{tabular}
  \begin{tabular}{lcc}
    \multirow{2}{*}{Dataset} & \multicolumn{2}{c}{Every Other}\\
    & {$n = 5000$} & {$n=2000$}\\
    \midrule
    MADE & 8.373 $\pm$ 0.120 & 8.364 $\pm$ 0.124 \\
    \midrule
    Greedy & 8.334 $\pm$ 0.125 & 8.315 $\pm$ 0.123 \\
    IP & 8.333 $\pm$ 0.129 & \textbf{8.314 $\pm$ 0.123}\\
    IP-var & \textbf{8.331 $\pm$ 0.126} & \textbf{8.314 $\pm$ 0.124}\\
    Zuko & 8.334 $\pm$ 0.126 & 8.315 $\pm$ 0.123\\
    \bottomrule
  \end{tabular}
  \label{tab:mask_algos_density_results}
  }
\end{table}

\subsubsection{Approximate Algorithm Comparisons}
\label{appendix:zuko_comp}

 In this section, we give a more direct example of comparison on autoregressive flow performance between the greedy and Zuko algorithms. While the two factorization schemes of the adjacency matrix are similar in that they approximately maximize the number of remaining paths in the neural network, they can yield different results in specific settings. In the experiment below, mask matrices found by our  Algorithm~\ref{alg:mf} and by Zuko (Algorithm~\ref{alg:zuko}) result in different performance. Zuko operates on the unique rows of an adjacency matrix, which causes issues when the matrix contains many repeating rows. This type of structure may be naturally encountered in datasets with star shaped graphs. Consider a $d$-by-$d$ adjacency matrix where the first ($d-1$) rows contain dependence on the first variable (i.e: $[1, 0, …, 0]$) and only the last row depends on the second variable (i.e.: $[0, 1, 0, …, 0]$). Given a budget of $h$ hidden units, Zuko would assign $\frac{h}{2}$ units to represent the variable corresponding to the last row. This ineffectively represents the dependence of all other outputs on the first variable, and is avoided by our greedy algorithm. In a non-linear dataset with $d$ variables and the above adjacency matrix, we compare StrAF with our greedy algorithm and the Zuko algorithm in the table below. We report runs from 5 random seeds, using a $95\%$ CI and $d$ hidden units. We see our method performs better, especially as $d$ increases. This further supports our claim that the choice of mask factorization can impact performance beyond enforcing a given independence structure.

\begin{table}[h]
\centering
{
\caption{Comparison of StrAF performance using our greedy algorithm and the Zuko algorithm on a star-shaped Bayesian network. Runs from 5 random seeds are reported, using a $95\%$ CI and $d$ hidden units. Evaluation is based on negative log-likelihood on a held-out test set (lower is better)}
\begin{tabular}{lcc}
\multirow{2}{*}{Dataset} & \multicolumn{2}{c}{Star-Shaped}\\
& {$d = 50$} & {$d=1024$}\\
\midrule
Greedy & \textbf{0.4311$\pm$ 0.5405} &\textbf{ -18.734 $\pm$ 1.999} \\
Zuko & 1.3276 $\pm$ 0.2806 & -14.550 $\pm$ 0.936 \\
\bottomrule
\end{tabular}
\label{tab:zuko_v_straf}
}
\end{table}

\section{Additional Experiment Results}

To validate the sample generation quality of StrNN when trained on the MNIST dataset, we display select samples generated by both StrNN and the MADE baseline in Figure~\ref{fig:binary_mnist_samples}. We show that even in the low data regime (models fitted on 1000 training samples), both models generate samples with reasonable quality. Hence, we observe that for the density estimation task, injecting prior structure using a StrNN improves likelihood values for each sample under the model without sacrificing generative quality.

\label{appendix: additional_results}
\begin{figure}[h]
    \begin{subfigure}[b]{\textwidth}
         \centering
         \includegraphics[width=\textwidth]{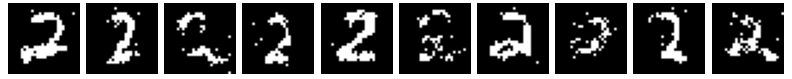}
         \caption{\small Samples generated from trained MADE model.}
     \end{subfigure}
    
     \begin{subfigure}[b]{\textwidth}
         \centering
         \includegraphics[width=\textwidth]{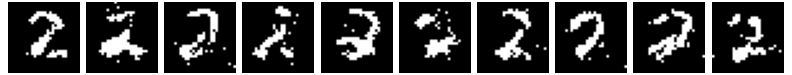}
         \caption{\small Samples generated from trained StrNN model with \texttt{nbr\_size=10}.}
     \end{subfigure}
    \centering
    \caption{\small Sample MNIST handwritten digits (label 2) generated by MADE and StrNN trained on 1000 data points. Both models generate samples of reasonable quality, while StrNN achieves higher likelihoods at the test samples, as illustrated in Figure~\ref{fig:binary}.}
    \label{fig:binary_mnist_samples}
\end{figure}

\section{Synthetic Data Generation}
\label{appendix: synthetic_data_gen}

In this paper, we used the following synthetic datasets:
\begin{enumerate}
    \item $d=800$ binary dataset where each variable depend on every other preceding variable ("Binary every\_other").
    \item $d=50$ binary dataset where the adjacency matrix is randomly generated based on sparsity threshold ("Binary random\_sparse").
    \item $d=20$ Gaussian dataset where each variable is dependent on 2 previous variables ("Gaussian prev\_2").
    \item $d=20$ Gaussian dataset where the adjacency matrix is randomly generated based on a sparsity threshold ("Gaussian random\_sparse"). 
    \item $d=15$ randomly generated non-linear and multi-modal dataset with sparse conditional dependencies between variables used for autoregressive flow evaluation.
    \item $d=5$ and $d=10$ randomly generated datasets following linear SEMs with sparse adjacency matrices for interventional and counterfactual evaluations. The details are described in Appendix~\ref{appendix: causal_data}.
\end{enumerate}

\subsection{Adjacency Structures}
\label{appendix: adjacency}

In this section we visualize the adjacency matrices that were used to generate the synthetic datasets listed above. The procedure to generate each synthetic dataset given these ground truth adjacency matrices are described in the following sections. We also use these adjacency matrices to perform weight masking using StrNN/StrAF, and as the input masks for the Graphical Normalizing Flow. We visualize the adjacency matrices used by the binary experiments in Section \ref{sec:experiment_binary_data} in Figure \ref{fig: binary_adj}.

\begin{figure}[h]
    \centering
    \includegraphics[width=0.7\textwidth]{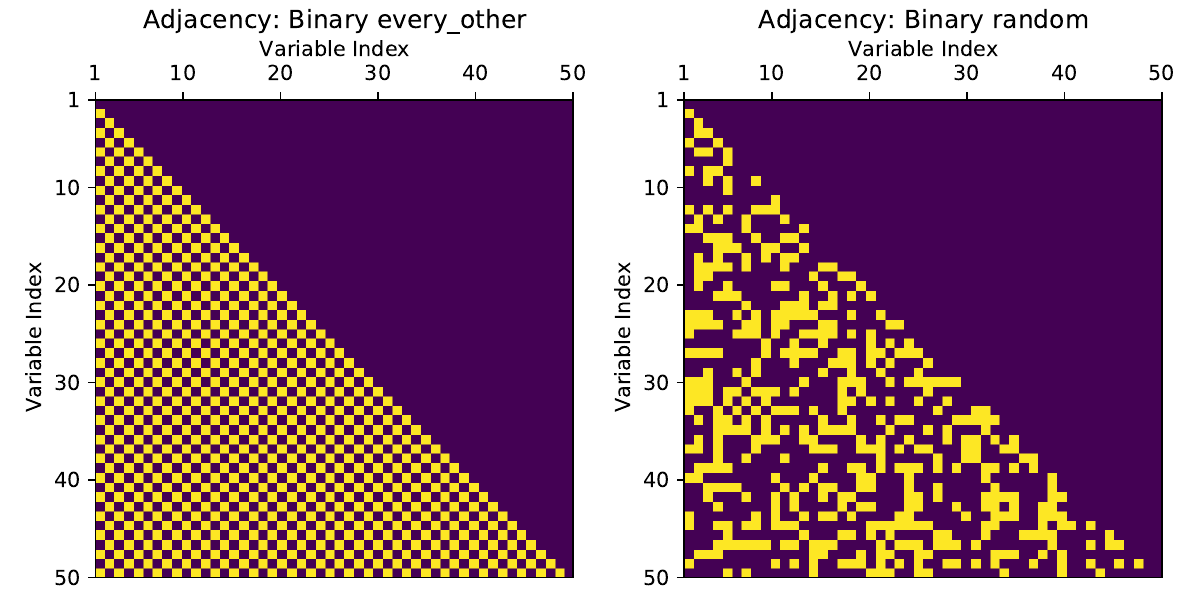}
    \caption{Adjacency matrix used to generate datasets for binary experiments. \textit{Left:} every\_other generation scheme. Note in our actual experiments, we used a 800-dimensional version of this adjacency structure. The 50-dimensional matrix is shown here for illustration only. \textit{Right:} random\_sparse generation scheme. Conditionally dependent variables are shown in yellow. These adjacency matrices are also used to generate StrNN mask matrices.}
    \label{fig: binary_adj}
\end{figure}

The true underlying conditional independence structure for the MNIST dataset used in Section \ref{sec:experiment_binary_data} is unknown, which is also a common challenge for any real-world/image dataset. Instead, when using the StrNN masked neural network, we aim to encode the inductive bias of \textit{locality}, so that density estimation for a single pixel only depends on its surrounding neighbourhood of pixels. For the results shown in the main text, we decided to use a neighbourhood size of 10 after an extensive hyperparameter search for this parameter. As explained briefly in Section~\ref{sec:experiment_binary_data}, the hyperparameter \texttt{nbr\_size} specifies the radius of the square context window originating from each pixel. Each pixel is modelled to be dependent on all previous pixels in that window, as specified by the variable ordering. For variable ordering, we use the default row-major pixel ordering for the MNIST images. The resulting adjacency matrix is visualized in Figure \ref{fig: mnist_adj}.
\begin{figure}[h]
    \centering
    \includegraphics[width=0.7\textwidth]{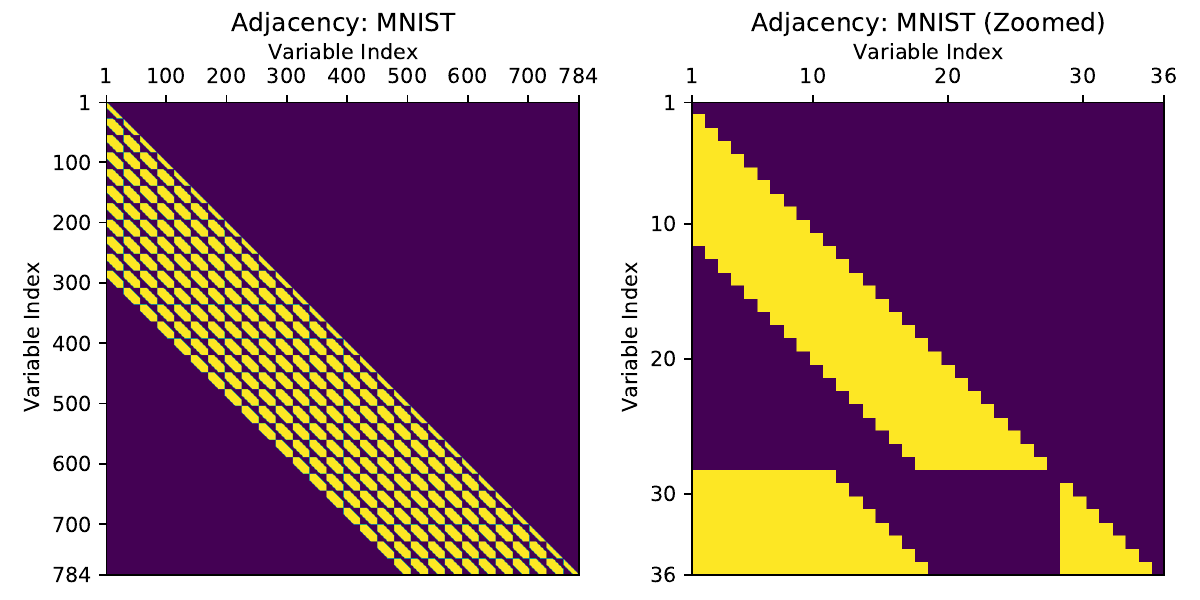}
    \caption{Adjacency matrix used to mask StrNN for the binary MNIST density estimation task, with neighbourhood size set to 10 (\texttt{nbr\_size=10}). Conditionally dependent variables are shown in yellow. \textit{Left:} All variables. \textit{Right:} Zoomed in view of first 36 variables for illustrative purposes.}
    \label{fig: mnist_adj}
\end{figure}

Next, we visualize the adjacency matrices that were used to generate the Gaussian synthetic datasets in Section \ref{sec:experiment_gaussian_data} in Figure \ref{fig: gaussian_adj}. These matrices are also used to generate the mask matrices for the StrNN.

Finally, in Figure \ref{fig: flow_adj} we visualize the adjacency matrix that was used to generate the non-linear multi-modal dataset in Section \ref{sec:straf_experiments}. This adjacency matrix is used to generate masks for StrAF, and is provided to GNF as the ground truth adjacency matrix for its input masking scheme.

\begin{figure}[h]
    \centering
    \includegraphics[width=0.7\textwidth]{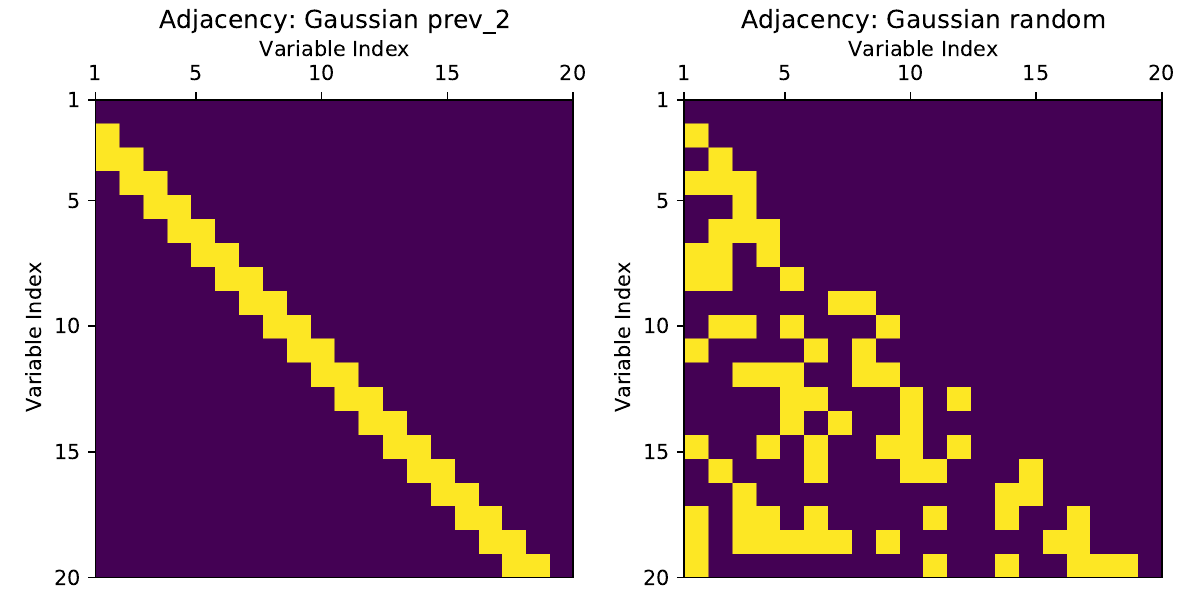}
    \caption{Adjacency matrices used to generate the Gaussian synthetic dataset. Matrices are also used to generate StrNN masks during density estimation tasks. Conditionally dependent variables are shown in yellow. \textit{Left:} prev\_2 generation scheme \textit{Right:} random generation scheme.}
    \label{fig: gaussian_adj}
\end{figure}

\begin{figure}[h]
    \centering
    \includegraphics[width=0.4\textwidth]{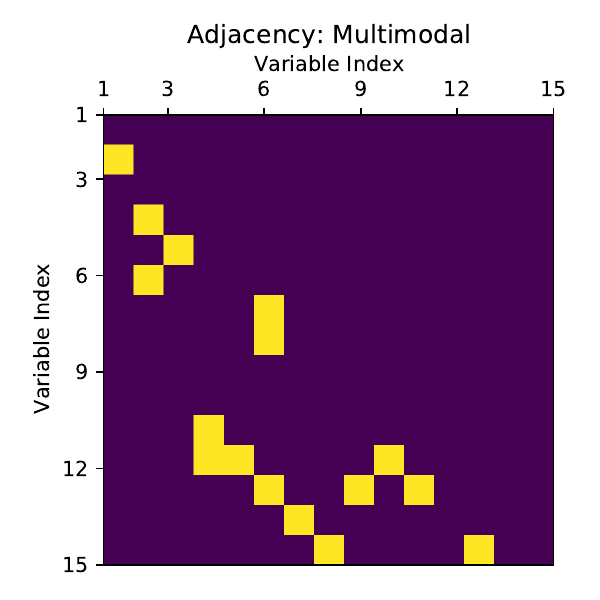}
    \caption{Adjacency matrices used to generate the multi-modal non-linear synthetic dataset used in Section \ref{sec:straf_experiments}. The matrix is also used to generate StrAF masks and for GNF masking. Conditionally dependent variables are shown in yellow.}
    \label{fig: flow_adj}
\end{figure}

\subsection{Binary Synthetic Data}
\label{appendix: binary_data_gen}

We observe that based on the autoregressive assumptions, the synthetic data generating process should draw each $x_i$ as a Bernoulli random variable (i.e., a coin flip) based on $x_1, ..., x_{i-1}$, for $i=1,...,d$. Given an adjacency matrix $A\in \{0,1\}^{d\times d}$, the general structure equations are given by:
\begin{align} \label{eq:binary_generation_equations}
    &x_i \sim \text{Bernoulli}(p_i), \ p_i=\text{Sigmoid}(\sum_{j=1}^{i-1}\alpha_{ij}x_j+c_i),
\end{align}
where $\alpha_{ij} = 0$ if $A_{ij}=0$, otherwise $\alpha_{ij} \sim \mathcal{N}(0, 1)$ and $c_i \sim \mathcal{N}(0, 1)$.
    
\subsection{Gaussian Synthetic Data}
\label{appendix: gaussian_data_gen}
Analogous to binary synthetic data generation, for Gaussian data, we sample each variable as:
\begin{align}
    \label{eq:gaussian_generation_equations}
    &x_i \sim \mathcal{N}(\mu_i, \sigma_i) \text{ where } \mu_i = \sum_{j=1}^{i-1}\alpha_{ij}x_j+c_i, \ \sigma_i \sim \mathcal{N}(0, 1),
\end{align}
where $\alpha_{ij} = 0$ if $A_{ij}=0$, otherwise $\alpha_{ij} \sim \mathcal{N}(0, 1)$.

\subsection{Synthetic Data for Autoregressive Flow Evaluation}
\label{appendix: flow_data_gen}
In this section, we describe the data generating process for the experiments in Section~\ref{sec:straf_experiments}. The objective is to generate a $d=15$-dimensional multi-modal and non-linear dataset. We create the ground truth adjacency matrix $A\in \{0, 1\}^{15\times 15}$ by sampling each entry in the matrix independently from the $\text{Uniform}(0, 1)$ distribution. Each element is converted to a binary value using a sparsity threshold of $0.8$. Moreover, upper triangular elements are then zeroed out. This results in a sparse binary adjacency matrix for which values of one indicate conditional dependence, and zeros indicate conditional independence. The exact matrix is visualized in Appendix \ref{appendix: adjacency}.

In our data generating process, variables with conditional dependencies are generated as a weighted sum of its preceding dependent variables. We generate a second matrix $W \in \mathbb{R}^{15\times 15}$ containing these weights, where each element is sampled from the Uniform$(-3, 3)$ distribution. This matrix $W$ is then multiplied element-wise by $A$ to zero out pairs of variables that are conditionally independent. If we denote entries of $W$ as $w_{ij}$, each dependent pair of variables is generated by the following process:
\begin{equation}
    x_t = \sqrt{\sum_{j=1}^{t-1}(w_{tj}x_{j})^2} + \varepsilon,\quad \varepsilon \sim \mathcal{N}(0, 1).
\end{equation}
Variables which are conditionally independent (e.g., $x_1$) are generated using a mixture of three Gaussians. For each variable and each Gaussian mixture component, we sample its mean from the Uniform$(-8, 8)$ distribution, and its standard deviation from the Uniform$(0.01, 2)$ distribution. We draw the mixture weights from the Dirichlet(1, 1, 1) distribution. At sampling time, we use this mixture weight vector to determine the number of samples to draw from each Gaussian mixture component. For our experiments, we draw $5000$ samples using this data generating process, and use a [0.6, 0.2, 0.2] ratio for training / validation / testing splits.

\section{Causal Inference}
\label{appendix: causal_inference}

\subsection{Algorithms}
\label{appendix: causal_algorithms}
The flow-based models can be easily employed to answer causal queries, and similar to CAREFL \cite{khemakhem2021causal}, we provide the corresponding algorithms using flows for causal queries in this section. For the following algorithms, we denote the forward transformer $\tau$ and the flow $\mathbf{T}$ as the sampling step which transforms the noise $\mathbf{z}$ to data $\mathbf{x}$. Similarly, the backward $\tau^{-1}$ and $\mathbf{T}^{-1}$ indicates the process of taking data to noise. This notation aligns with the setup in Section \ref{sec:structured_CAREFL}.

\subsubsection{Interventions}

To generate samples under intervention $do(x_i=\alpha)$, we can simply modify the corresponding structural equation of the intervened variable by breaking the connection of $x_i$ to $\mathbf{x}_{<\pi(i)}$ and setting $x_i$ to the intervened value of $\alpha$. Then, we can propagate newly sampled latent variables through the modified sampling process of the flow as shown in Algorithm \ref{alg:causal_intv1}.

\begin{algorithm}[hbt!]
\caption{Generate interventional samples (sequential)}\label{alg:causal_intv1}  
\KwData{interventional variable $x_i$, intervention value $\alpha$, number of samples $S$}

for $s=1$ to $S$ do:\\
\qquad sample $\mathbf{z}(s)$ from flow base distribution (the value of $z_i$ can be discarded)\\
\qquad set $x_{i}(s) = \alpha$\\
\qquad for $j = 1$ to total dimension $d$ except $j=i$ do:\\
\qquad \qquad compute $x_{j}(x) = \tau_{j}(z_{j}(s), \mathbf{x}_{<\pi(j)}(s))$ \\
\qquad end for\\
end for\\
return interventional sample $\mathbf{X} = \{\mathbf{x}(s): s=1, ..., S\}$

\end{algorithm}

If the flow only requires one pass for sampling such as inverse autoregressive flows which prevents us from changing the sampling process of each individual dimension, we can generate one random sample $\mathbf{x}$ first and then invert the flow to modify $z_i$ so that the corresponding $x_i$ equals to the intervened value $\alpha$. We additionally give Algorithm \ref{alg:causal_intv2} for flows that only require one pass for sampling, such as inverse autoregressive flows.

\begin{algorithm}[hbt!]
\caption{Generate interventional samples (parallel sampling)}\label{alg:causal_intv2} 
\KwData{interventional variable $x_i$, intervention value $\alpha$, number of samples $S$}

for $s=1$ to $S$ do:\\
\qquad sample $\mathbf{z}(s)$ from flow base distribution (the value of $z_i$ can be discarded)\\
\qquad compute initial sample through $\mathbf{x}(s) = \mathbf{T}(\mathbf{z}(s))$\\
\qquad set $z_{i}(s) = \tau^{-1}_{i}(\alpha, \mathbf{x}(s))$\\
\qquad compute final sample $\mathbf{x}(s) = \mathbf{T}(\mathbf{z}(s))$\\
end for\\
return interventional sample $\mathbf{X} = \{\mathbf{x}(s): s=1, ..., S\}$

\end{algorithm}

\subsubsection{Counterfactuals}

Furthermore, computing counterfactuals is typically more challenging for many causal models as it requires inferring the latent variables $\mathbf{z}$ conditioned on the observed data $\mathbf{x}_{\text{obs}}$. However, with the invertible nature of flows, it becomes straightforward to perform counterfactual inference as we have access to both forward and backward transformations between $\mathbf{x}$ and $\mathbf{z}$. Here, we present Algorithm \ref{alg:causal_ctf} for computing counterfactuals with flows. 

\begin{algorithm}[hbt!]
\caption{Compute counterfactual values} \label{alg:causal_ctf}  
\KwData{observed data $\mathbf{x}_{obs}$, counterfactual variable $x_i$ and value $\alpha$}

infer $\mathbf{z}_{obs}$ from observed data $\mathbf{z}_{obs} = \mathbf{T}^{-1}(\mathbf{x}_{obs})$ (the value of $z_{i}^{obs}$ can be discarded)\\
initialize $\mathbf{z}_{ctf} = \mathbf{z}_{obs}$\\
set $z^{ctf}_{i} = \tau_{i}^{-1}(\alpha, \mathbf{x}^{obs}_{<\pi(i)})$\\
compute counterfactual data $\mathbf{x}_{ctf} = \mathbf{T}(\mathbf{z}_{ctf})$\\

return $\mathbf{x}_{ctf}$

\end{algorithm}

\subsection{Data Generation}
\label{appendix: causal_data}

Here, we describe the data generating process for causal inference evaluations in Section \ref{sec:experiment_causal}. To highlight the benefits from incorporating graphical structures, we generate both 5- and 10-variable SEMs with sparsely dependent variables, following the same procedures as outlined below. 

We first generate the adjacency matrix specifying the dependencies among the variables. Each entry of the matrix is sampled from a Uniform(-2, 2) distribution. We explicitly set any entries with absolute values smaller than $1.5$ along with all upper triangular entries to zero. This creates a sparse matrix with zeros representing the independencies and non-zero elements showing the strength of variable dependencies. The corresponding matrices are visualized in Figure \ref{fig:causal_sem}. 

\begin{figure}[h]
    \begin{subfigure}[b]{0.43\textwidth}
         \centering
         \includegraphics[width=\textwidth]{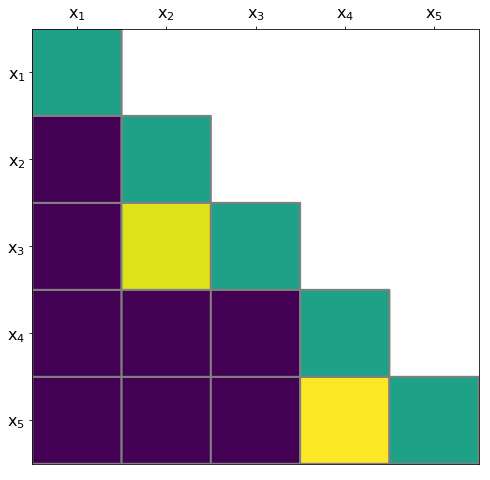}
         \caption{SEM with 5 variables}
     \end{subfigure}
     \begin{subfigure}[b]{0.53\textwidth}
         \centering
         \includegraphics[width=\textwidth]{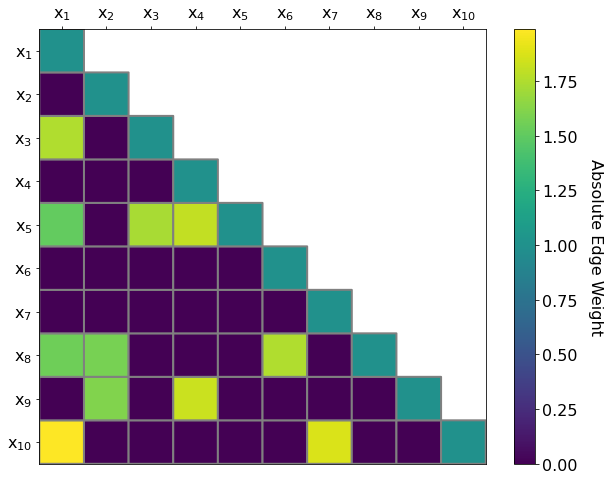}
         \caption{SEM with 10 variables}
     \end{subfigure}
    \centering
    \caption{Adjacency matrices containing the coefficients relating variables in the linear additive SEM. Each row indicates the observed variable, and each column indicates the contribution of that variable to the observed variable. The absolute value of each coefficient is shown, and coefficients with zero weight indicate conditional independence.}
    \label{fig:causal_sem}
\end{figure}

Then, we generate synthetic data according to a linear additive structural equation model. Given an $d$-dimensional variable $\mathbf{x}$ whose distribution is defined by the SEM, each dimension of $\mathbf{x}$ follows the linear relationship:
\begin{equation}
    x_i = \sum^{i-1}_{j=1} w_{ij} x_j + \epsilon_i.
\end{equation}
The weights $w_{ij}$ on the preceding variables came from the generated sparse adjacency matrix, and the additive noise variable $\epsilon_j$ is sampled independently from a standard Gaussian distribution. Observed data corresponding to the SEM is generated via ancestral sampling. We first draw a sample from the noise terms and then sequentially compute the observed variables according to the relationship defined by the adjacency matrix. 

\subsection{Evaluation Metrics}
\label{appendix: causal_metrics}

As mentioned in Section \ref{sec:experiment_causal}, we propose two evaluation metrics, total intervention mean squared error (total I-MSE) and total counterfactual mean squared error (total C-MSE), to comprehensively assess the ability of a causal generative model for answering causal queries. In this section, we present additional details, highlight key distinctions, and provide formal definitions for these metrics.

\subsubsection{Interventions}
We draw interventional samples from both the underlying true SEM where $x_i \sim \mathbb{P}(x_i|do(x_j=\alpha))$ and the causal generative flows where $\bar{x}_i \sim \mathbf{T}(x_i|do(x_j=\alpha))$. To estimate the expectation of the interventional values for the $i$-th dimension, we draw $K=1000$ samples each for $x_i$ and $\bar{x}_i$. We can then compute the following squared error over different sample dimensions $i$, intervened variables $j$, and the corresponding value $\alpha$:
\begin{equation}
    \text{I-Error}(i, j, \alpha) = \left(\frac{1}{K} \sum_{k=1}^{K} x_{i}^{k} - \frac{1}{K} \sum_{k=1}^{K} \bar{x}_{i}^{k}\right)^2.
\end{equation}

We select a set $A$ of intervened values to evaluate the models. The intervened value $\alpha$ came from one of $|A| = 8$ integers perturbed around the mean of the intervened variable $x_i$. Note that we exclude interventions on all preceding variables of the intervened variable as they remain unaffected by the interventions. Hence, the total I-MSE can be obtained by averaging over all possible interventional queries:
\begin{equation}
    \text{Total I-MSE} = \frac{1}{|A| \times d \times (d+1) / 2} \sum^d_{i=1} \sum_{j=1}^i \sum_{\alpha \in A}\text{I-Error}(i, j, \alpha),
\end{equation}
where $d$ is the total dimension of the data $\mathbf{x}$ generated from the SEM.

\subsubsection{Counterfactuals}

Similarly, we compute the counterfactual values $x_i$ from the ground truth as well as $\bar{x}_i$ from the flow models, and we use $i$, $j$, and $\alpha$ to denote the dimension to compute, the counterfactual variable, and the associated value. For counterfactuals, we compute the results based on observed data rather than generating new samples; hence, we collect $K=1000$ observed data $\mathbf{x}_{obs}$ and pose various counterfactual queries to each observed data point. We can derive the following error averaged over different $\mathbf{x}_{obs}$:
\begin{equation}
    \text{C-Error}(i, j, \alpha) = \frac{1}{K} \sum_{k=1}^K \left(x_{i}^{k} -  \bar{x}_{i}^{k}\right)^2.
\end{equation}

We also select a set $A$ of counterfactual values with $|A| = 8$ and exclude evaluations on all preceding variables of the counterfactual variable $x_j$. Then, by considering all possible counterfactual queries, the total C-MSE can be similarly written as:
\begin{equation}
    \text{Total C-MSE} = \frac{1}{|A| \times d \times (d+1) / 2} \sum^d_{i=1} \sum_{j=1}^i \sum_{\alpha \in A}\text{C-Error}(i, j, \alpha),
\end{equation}
where $d$ again is the total dimension of the data $\mathbf{x}$ generated from the SEM.

\subsection{Experimental Setup}

The causal inference experiments are relatively robust to different hyperparameter settings. We mostly follow the same setting as in CAREFL. Each model is trained using the Adam optimizer with learning rate of $0.001$ and  $\beta = (0.9, 0.999)$, along with a scheduler decreasing the learning rate by a factor of 0.1 on plateaux. We train for $750$ epochs with a batch size of $32$ data points. The flow contains 5 layers of sub-flows, and the transformers are 4-layer MADE-like networks with 10 hidden units that incorporate the corresponding autoregressive and graphical structures with masking.

\section{Experiment Setup}
\label{appendix: experiment_setup}

\subsection{Experimental Setup for Binary and Gaussian Density Estimation}
\label{sec:exp_setup_binary_gaussian}

In this section, we describe the experimental setup for the binary and Gaussian density estimation tasks reported in Sections~\ref{sec:experiment_binary_data} and \ref{sec:experiment_gaussian_data}.

\textbf{Method Selection.} We compared StrNN using the greedy mask factorization algorithm (Algorithm~\ref{alg:mf}) to the fully autoregressive MADE baseline as proposed in~\cite{germain2015made}. MADE serves as a natural baseline since both methods use the outputs of an autoencoder to parameterize marginal probabilities, while our StrNN method has the added capability of enforcing additional conditional independence properties. 

\textbf{Training.} Each model is trained with the AdamW optimizer with a batch size of 200 for a maximum of 5000 epochs.
 
\textbf{Hyperparameters.} We employed a grid search to find the optimal hyperparameters for StrNN and MADE respectively, where the grid is provided in Table~\ref{tab:strnn_made_hp_grid}. The number of hidden layers is varied during the hyperparameter search, and the number of hidden units in each hidden layer is determined by the input dimension $d$ times the hidden size multiplier of that layer. The Best hyperparameters for each model, dataset, and sample size discussed in Sections~\ref{sec:experiment_binary_data} and \ref{sec:experiment_gaussian_data} are not listed here since there are too many combinations. Please refer to the code repositories for reproducing the results.

\textbf{Evaluation Metrics.} Results from binary experiments are reported in terms of the negative log-likelihood (NLL) in Figures~\ref{fig:binary} and~\ref{fig:gaussian_synth}. Note that in the binary case, the NLL can simply be rewritten as the binary cross-entropy loss:
\begin{equation}
\label{eq:cross_entropy}
-\log p(\mathbf{x}) 
= \sum^d_{j=1} -\log p(x_j|\mathbf{x}_{<j})
= \sum^d_{j=1} -x_j \log \hat{x}_j 
- (1-x_j)\log (1 - \hat{x}_j).
\end{equation} 
The results from the Gaussian experiments are also reported in NLL, which is calculated by using the neural network outputs as the parameters in each marginal conditional of the Gaussian distribution. The error ranges for the results from these experiments are computed as standard error across samples in the held-out test set.

\begin{table}[h]
    \small
    \centering
    \begin{tabular}{l|l}
        Hyperparameter & Grid Values \\
        \hline
        Activation & [relu] \\
        Epsilon & [1, 0.01, 1e-05] \\
        Hidden size multiplier 1 & [1, 4, 8, 12] \\
        Hidden size multiplier 2 & [1, 4, 8, 12] \\
        Hidden size multiplier 3 & [1, 4, 8, 12] \\
        Hidden size multiplier 4 & [1, 4, 8, 12] \\
        Hidden size multiplier 5 & [1, 4, 8, 12] \\
        Number of hidden layers & [1, 2, 3, 4, 5] \\
        Learning rate & [0.1, 0.05, 0.01, 0.005, 0.001] \\
        Weight decay & [0.1, 0.05, 0.01, 0.005, 0.001]
    \end{tabular}
    \caption{\small Hyperparameter grid: StrNN vs. MADE}
    \label{tab:strnn_made_hp_grid}
\end{table}

\subsection{Experimental Setup for Normalizing Flow Evaluations}
\label{appendix: nf_setup}
In this section, we describe the hyperparameters and metrics used for the experimental evaluation between the Structured Autoregressive Flow (StrAF), Structured Continuous Normalizing Flow (StrCNF), and other flows in Section \ref{sec:straf_experiments}.

\textbf{Method Selection.} We compared the StrAF against a fully autoregressive flow (denoted ARF in the main text) and the Graphical Normalizing Flow (GNF) \cite{wehenkel2021graphical}. The fully autoregressive flow assumes no conditional independencies in the data generation process, and uses a full lower triangular adjacency matrix for masking. Meanwhile, the GNF model encodes conditional independencies, and we provide it access to the true adjacency matrix. For continuous flows, we compare the StrCNF against the FFJORD baseline and the model provided by \cite{weilbach20sccnf}. The FFJORD baseline uses a fully connected neural network to represent the flow dynamics, while the \cite{weilbach20sccnf} model can inject structure only into neural networks with a single hidden layer. This can be circumvented by stacking these neural networks, but this is shown to do worse than StrCNF in our experiments. We evaluate each model on the synthetic data described in Appendix \ref{appendix: flow_data_gen}.

\textbf{Training.}
Each model is trained using the Adam optimizer for a maximum of 150 epochs using a batch size of 256. During all runs, the models were trained using early stopping on the validation log-likelihood loss with a patience of 10 epochs, after which the model state at the best epoch was selected. We consider two additional training schedules: decreasing the learning rate by a factor of 0.1 on plateaus where the loss does not improve for five epochs (denoted Plateau), and a single scheduled decrease by a factor of 0.1 at epoch 40 (denoted MultiStep). In addition to the standard fixed learning rate, we select between these training schedules as a hyperparameter.

While the GNF can learn an adjacency matrix from data, we are interested in scenarios where an adjacency matrix is prescribed. Thus, we disable the learning functionality of the adjacency in GNF by stopping gradient updates to the GNF input mask matrix. We retain the one hot encoding network described in the GNF paper. The fully autoregressive flow is implemented by using a GNF with a full lower triangular adjacency matrix. Latent variables are permuted in ARF and GNF, as described by their original publications, but we do not permute variables for StrAF.

\textbf{Discrete Flow Hyperparameters.}
Here we report the process used to select model hyperparameters for discrete flows. We use the UMNN~\citep{NEURIPS2019_2a084e55} as each flow's transformer. We use 20 integration steps to compute the transformer output. The UMNN is conditioned on values computed by the conditioner. We select the dimension of these values as a hyperparameter named "UMNN Hidden Size". We then determine the best hyperparameters for each method using a grid search with the values in Table~\ref{tab:straf_hyperparameter_grid}.
\begin{table}[h]
    \small
    \centering
    \begin{tabular}{l|l}
        Hyperparameter & Grid Values \\
        \hline
        Flow Steps & $[1, 5, 10]$ \\
        Conditioner Net Width & $[25, 50, 500, 1000]$ \\
        Conditioner Net Depth & $[2, 3, 4]$ \\
        UMNN Hidden Size & $[10, 25, 50]$ \\
        UMNN Width & $[100, 250, 500]$ \\
        UMNN Depth & $[2, 4, 6]$ \\
        Learning Rate & $[0.001, 0.0001]$\\
        LR Scheduler & [Fixed, Plateau, MultiStep]
    \end{tabular}
    \caption{\small Hyperparameter grid for discrete flows}
\label{tab:straf_hyperparameter_grid}
\end{table}

The best hyperparameters (as determined by validation loss) that were selected for each model used in the main text are reported  in Table~\ref{tab:flows_hyperparameters}.
\begin{table}[h]
    \small
    \centering
    \begin{tabular}{l|lll}
        Hyperparameter & ARF & GNF & StrAF\\
        \hline
        Flow Steps & 5 & 5 & 10 \\
        Conditioner Net Width & 500 & 500 & 500 \\
        Conditioner Net Depth & 4 & 2 & 2\\
        UMNN Hidden Size & 25 & 50 & 25\\
        UMNN Width & 250 & 250 & 250 \\
        UMNN Depth & 6 & 6 & 6 \\
        Learning Rate & 0.001 & 0.001 & 0.001\\
        LR Scheduler & Plateau & Plateau & Plateau
    \end{tabular}
    \caption{Final hyperparameters used per flow model}
    \label{tab:flows_hyperparameters}
\end{table}

We fix the best hyperparameters for each model, and then re-train each model on the same data splits using eight new random seeds. This ensemble of eight models was used to generate confidence intervals for the test NLL.

\textbf{Continuous Flow Hyperparameters.}
Here we report the hyperparameters used in the continuous normalizing flow models. We use the FFJORD implementation of the CNF as a baseline, and then substitute the neural network representing dyanmics in the architecture with the \cite{weilbach20sccnf} masked neural network, or the StrNN. For the ODE solver, we generally use the default FFJORD hyperparameters, as reported in Table \ref{tab:cont_flow_fixed_hp}.
\begin{table}[h]
    \small
    \centering
    \begin{tabular}{l|l}
        Hyperparameter & Values\\
        \hline
        ODE-NN Activation Function & Tanh \\
        ODE Time Length & 0.5 \\
        ODE Train Time & True \\
        ODE Solver & dopri5 \\
        ODE Solver Absolute Tolerance & 1e-5 \\
        ODE Solver Relative Tolerance & 1e-5  \\
    \end{tabular}
    \caption{Fixed hyperparameters for CNF models.}
    \label{tab:cont_flow_fixed_hp}
\end{table}

Other hyperparameters were selected using validation loss from the following grid in Table \ref{tab:cont_flow_hp}. We again note that the neural network representing ODE dynamics for the Weilbach model is adapted to have multiple hidden layers by stacking multiple neural networks that have a single hidden layer. The final hyperparameters for each model are reported in Table \ref{tab:cont_flow_hp_final}. As before, the final test NLL results in the main text are reported after fixing hyperparameters, and then training 8 models from random initialization.
\begin{table}
\begin{minipage}{.49\linewidth}
    \small
    \centering
    \begin{tabular}{l|l}
        Hyperparameter & Grid\\
        \hline
        Flow Steps & [1, 5, 10] \\
        ODE-NN Width & [50, 500] \\
        ODE-NN Depth & [2, 3, 8] \\
        Learning Rate & [5e-3, 1e-3, 1e-4]
    \end{tabular}
    \caption{Hyperparameter grid for CNF models.}
    \label{tab:cont_flow_hp}
\end{minipage}\hfill
\begin{minipage}{.49\linewidth}
    \small
    \centering
    \begin{tabular}{l|lll}
        Hyperparameter & FFJORD & Weilbach & StrCNF\\
        \hline
        Flow Steps & 5 & 10 & 10\\
        ODE-NN Width & 50 & 500 & 500\\
        ODE-NN Depth & 2 & 3 & 2\\
        Learning Rate & 5e-3 & 5e-3 & 5e-3
    \end{tabular}
    \caption{Final hyperparameters for CNF models.}
    \label{tab:cont_flow_hp_final}
\end{minipage}
\end{table}

Importantly, the adjacency matrix for the StrCNF must be modified to set the main diagonal to be all ones. Intuitively, this allows each variable to interact with its prior value in the dynamics. This was also applied to the Weilbach model. The choice to fill the main diagonal of the main adjacency matrix was made using validation performance. Alternatives such as using the original adjacency or reflecting the adjacency about the main diagonal performed worse on the validation data.

\pagebreak
\subsection{Graphical Normalizing Flow Baseline}
\label{appendix: gnf}
We use the official implementation of Graphical Normalizing Flows (GNF) as provided in the publication~\cite{wehenkel2021graphical}. In our experiments, we found issues when using the code to evaluate the reverse flow, i.e., the transformation from the latent space to the data distribution. The code we believe to be problematic has been copied below with minor edits for clarity. The original version can be found at: \url{https://github.com/AWehenkel/Graphical-Normalizing-Flows/blob/2cc6fba392897ec1884b4f01a695b83d3c04883a/models/NormalizingFlow.py#L166}.
\begin{verbatim}
    def forward(self, x):
        inv_idx = torch.arange(x.shape[1] - 1, -1, -1).long()
        for step in self.steps:
            z = step(x)
            x = z[:, inv_idx]
        return z

    def invert(self, z):
        for step in range(len(self.steps)):
            z = self.steps[-step].invert(z)
        return z
\end{verbatim}

Typically, if we denote the forward flow transformation as $\mathbf{T} = \mathbf{T}_1 \circ ... \circ \mathbf{T}_K$, then the reverse flow should be computed as $\mathbf{T}^{-1} = \mathbf{T}^{-1}_K \circ ... \circ \mathbf{T}^{-1}_1$. The GNF code implemented in Python, however, computes the reverse flow as $\mathbf{T}^{-1} = \mathbf{T}^{-1}_1 \circ \mathbf{T}^{-1}_K \circ \mathbf{T}^{-1}_{K-1} ... \circ \mathbf{T}^{-1}_2$. These reverse flow transformations are only the same when $\mathbf{T}_1 = \textbf{Id}$, but in general there is no guarantee they are equivalent when learning all layers in the flow, i.e., $\mathbf{T}_1 \neq  \textbf{Id}$.

More importantly, while the forward transformation permutes latent variables between flow steps, the inverse transformation does not perform the same permutation. We believe that this choice harms sample quality generation, as can be seen in our experimental section. However, we could not find reference to the inversion process in the GNF publication, and thus were unable to determine if this was a bug in their code, or an actual limitation of the method. As such, we decided to simply use the official GNF code to produce the results for our evaluations.

\end{document}